\documentclass{article}

\usepackage{microtype}
\usepackage{graphicx}
\usepackage{subcaption}
\usepackage{booktabs} 
\usepackage{multirow}
\usepackage{makecell}
\usepackage{enumitem}
\usepackage{placeins}
\usepackage{hyperref}

\usepackage[preprint]{icml2026}

\usepackage{amsmath}
\usepackage{amssymb}
\usepackage{mathtools}
\usepackage{amsthm}

\usepackage[capitalize,noabbrev]{cleveref}

\theoremstyle{plain}
\newtheorem{theorem}{Theorem}[section]
\newtheorem{proposition}[theorem]{Proposition}

\theoremstyle{definition}
\newtheorem{definition}[theorem]{Definition}

\theoremstyle{remark}

\usepackage[textsize=tiny]{todonotes}

\icmltitlerunning{CAPS: Unifying Attention, Recurrence, and Alignment in Transformer-based Time Series Forecasting}

\begin{document}

\twocolumn[
  \icmltitle{CAPS: Unifying Attention, Recurrence, and Alignment in Transformer-based Time Series Forecasting}



  \icmlsetsymbol{equal}{*}

  \begin{icmlauthorlist}
    \icmlauthor{Viresh Pati}{1}
    \icmlauthor{Yubin Kim}{1}
    \icmlauthor{Vinh Pham}{1}
    \icmlauthor{Jevon Twitty}{1}
    \icmlauthor{Shihao Yang}{1}
    \icmlauthor{Jiecheng Lu}{1}

  \end{icmlauthorlist}

  \icmlaffiliation{1}{Georgia Institute of Technology, Atlanta, GA, USA}

  \icmlcorrespondingauthor{Jiecheng Lu}{jliu414@gatech.edu}
  \icmlcorrespondingauthor{Shihao Yang}{shihao.yang@isye.gatech.edu}

  \icmlkeywords{Machine Learning, ICML}

  \vskip 0.3in
]



\printAffiliationsAndNotice{}

\begin{abstract}

This paper presents $\textbf{CAPS}$ (Clock-weighted Aggregation with Prefix-products and Softmax), a structured attention mechanism for time series forecasting that decouples three distinct temporal structures: global trends, local shocks, and seasonal patterns. Standard softmax attention entangles these through global normalization, while recent recurrent models sacrifice long-term, order-independent selection for order-dependent causal structure. CAPS combines SO(2) rotations for phase alignment with three additive gating paths --- Riemann softmax, prefix-product gates, and a Clock baseline --- within a single attention layer. We introduce the Clock mechanism, a learned temporal weighting that modulates these paths through a shared notion of temporal importance. Experiments on long- and short-term forecasting benchmarks surpass vanilla softmax and linear attention mechanisms and demonstrate competitive performance against seven strong baselines with linear complexity. Our code implementation is available at this \href{https://github.com/vireshpati/CAPS-Attention}{link}.

\end{abstract}

\section{Introduction}
Time series forecasting has important applications across domains such as finance \cite{GIANTSIDI2025104719}, weather and climate modeling \cite{kim2024comprehensive}, and traffic flow prediction \cite{LIU2026132269}. Effective forecasting requires reconciling three inductive biases. First, estimating stable global structure (e.g., levels) benefits from order-independent aggregation that weights observations by relevance without respect to temporal order \cite{cleveland1990stl, wu2021autoformer}. Second, modeling local shocks and transient behavior requires order-dependent propagation, where contributions decay causally over time \cite{BoxJenkins2015, zhou2021informer}. Third, capturing seasonality and repeating patterns demands alignment-sensitive comparison based on relative phase instead of absolute position \cite{wu2021autoformer, zhou2022fedformer}. Classical statistical models achieve this separation through explicit decomposition into level, trend, seasonal, and residual components \cite{HyndmanAthanasopoulos2021, cleveland1990stl}, but require parameter tuning and cannot leverage large datasets. 

Modern deep learning models promise to learn these decompositions from data. Recent work has adopted Transformer-based architectures for time series forecasting, motivated by their success in sequence modeling \cite{Wen2023Survey}. Transformer-based models process temporal relationships primarily through the attention mechanism \cite{vaswani2017attention}. Attention mechanisms excel at content-based selection, by which we mean aggregation where weights depend on query-key similarity rather than solely on temporal distance \cite{vaswani2017attention, gu2024mamba}. However, a single softmax normalization entangles the classic decomposition of trends, shocks, and seasonality, a challenge emphasized in recent surveys of statistical and deep learning-based time series forecasting \cite{Lim2021TSsurvey, Wen2023Survey, zeng2023transformers}. While attention can model global structure, local shocks, and seasonal patterns in practice, this coupling prevents the mechanism from simultaneously (i) maintaining stable trend estimates across growing histories, (ii) applying time-step-independent decay to isolated shocks, and (iii) aligning observations based on relative phase offset. We discuss these challenges in Section \ref{sec:motivating-example}.

To compensate, existing architectures introduce auxiliary components such as series decomposition \cite{wu2021autoformer, zhou2022fedformer}, frequency-domain mixing \cite{wu2023timesnet}, and local patching \cite{nie2023a}. However, these mechanisms operate outside the core attention kernel and require manual specification and tuning of decomposition windows or decay rates. Empirical studies further show that many such variants fail to consistently outperform simple linear projections \cite{zeng2023transformers}. The limitations may not arise from representational capacity, but could arise from how past information is weighted, normalized, and aggregated.

A complementary line of work addresses this issue by replacing softmax normalization with order-dependent aggregation. Linear attention mechanisms \cite{katharopoulos2020transformers} and gated recurrent models \cite{orvieto2023resurrecting,yang2023gated} propagate information through associative updates, enabling linear-time computation and improved stability over long horizons. Structured state-space models explicitly parameterize temporal evolution through learned transition operators, achieving strong performance on long-sequence modeling tasks \cite{gu2022efficiently,gu2024mamba}. These approaches successfully decouple order-dependent propagation from global normalization, but they achieve this by discarding content-based selection since weights depend on temporal distance rather than query-key similarity. 

In this work, we show how a single attention layer can simultaneously support all three inductive biases: order-independent global aggregation, order-dependent causal propagation, and alignment-sensitive comparison. We introduce \textbf{CAPS}, a three-part attention mechanism that decouples these components through three additive gating paths grounded in group-theoretic decomposition. CAPS encodes alignment through block-diagonal SO(2) rotations that implement RoPE \cite{su2024roformer}, models causal decay through diagonal SPD prefix-product gates, and captures global structure through a weighted Riemann softmax function called the Clock mechanism. Unlike existing approaches that either couple all three mechanisms through softmax normalization, separate them by discarding content-based selection, or approximate them by stacking attention layers, CAPS captures all three capabilities within a unified layer.

 Our contributions can be summarized as follows:
\begin{enumerate}[label=(\roman*)]
  \item We prove that a single layer of softmax attention couples all tokens through global normalization (Proposition~\ref{prop:coupling}), preventing token-independent decay for transient dynamics. This is a fundamental limitation for time series with local shocks.
    
    \item We introduce CAPS, a structured attention mechanism that decouples global aggregation, causal decay, and phase alignment through three additive gating paths---Riemann softmax, prefix-product gates, and a learned Clock---unified within a single linear attention layer.
    
    \item CAPS outperforms Linear Attention with RoPE on all ten datasets and achieves competitive performance against strong baselines, with average rank 2.3 and four first-place finishes in the summarized results. Ablations show the gains come specifically from the linear setting: the three-path mechanism improves linear attention by 6.1\%, but has essentially no effect with softmax (0.7\%).
\end{enumerate}

\section{Background}
\label{sec:background}

\subsection{Problem Setup and Notation}

We consider multivariate time series forecasting. Given an input sequence $(x_1, \ldots, x_L)$ with $x_t \in \mathbb{R}^C$ across $C$ channels, the goal is to predict future values $(\hat{y}_{L+1}, \ldots, \hat{y}_{L+H})$ over a horizon $H$ for $C_{\mathrm{target}} \leq C$ target channels. We use $h_t \in \mathbb{R}^d$ for hidden representations after embedding.

\subsection{Linear Operator View of Sequence Models}
Sequence models can be expressed as linear operators acting on past representations. In this view, the output at time $t$ is
\begin{equation}
o_t = \sum_{i=1}^t x_i M_{t,i} = \sum_{i=1}^t x_i W_v \; g(\{x_j\}_{j=1}^t, t,i) \; W_o
\label{eq:linear-op}
\end{equation}
where $x_i \in \mathbb{R}^{d_x}$ is the input representation at position $i$, $W_v, W_o \in \mathbb{R}^{d_x \times d}$ are value and output projections, and $g(\{x_j\}_{j=1}^t,t,i) \in \mathbb{R}^{d\times d}$ is a position-dependent mixing operator that determines how position $i$ contributes to the output at position $t$. The effective weight matrix is $M_{t,i}=W_vg(\{x_j\}^t_{j=1},t,i)W_o$.

This formulation encompasses attention mechanisms, linear recurrent networks, and state-space models. The essential difference between these architectures lies in the structure of the mixing function $g$: how it aggregates information from the history $\{x_j\}_{j=1}^t$, whether it depends on content or only position, and whether it factors into separate alignment and scaling components.

\subsection{Order-Independent Aggregation: Softmax Attention}

Standard Transformer attention defines $g$ via softmax normalization for each head:
\begin{equation}
g^{\mathrm{soft}}(\{x_j\}, t,i) = \frac{\exp(q_t^\top k_i)}{\sum_{j=1}^{t} \exp(q_t^\top k_j)} \cdot I,
\label{eq:softmax}
\end{equation}
where $q_t = W_q x_t$, $k_i = W_k x_i$, and $I \in \mathbb{R}^{d \times d}$ is the identity matrix \cite{vaswani2017attention}. The scalar attention weight multiplies $I$ because standard attention applies uniform scaling across dimensions---a constraint that CAPS relaxes. This  induces order-independent contrast, since relative weights between positions depend only on content similarity to $q_t$, not temporal distance or intervening states. While suitable for global aggregation, softmax attention lacks intrinsic temporal decay, even with advanced positional embeddings \cite{su2024roformer}. Causal structure typically requires auxiliary mechanisms \cite{wu2021autoformer,nie2023a,wu2023timesnet} outside the attention kernel.

\subsection{Order-Dependent Propagation: Prefix Products}

Recurrent and state-space models induce order-dependent weighting through multiplicative dynamics. A linear recurrence $h_t = S_t h_{t-1} + B x_t$ unrolls to
\begin{equation}
o_t = \sum_{i=1}^{t} \left( \prod_{k=i+1}^{t} S_k \right) B x_i,
\label{eq:prefix-product}
\end{equation}
yielding the mixing operator
\begin{equation}
g^{\mathrm{prefix}}(\{x_j\}, t,i) = \prod_{k=i+1}^{t} S_k,
\label{eq:prefix-g}
\end{equation}
where $S_k \in \mathbb{R}^{d \times d}$ are transition operators. This structure depends explicitly on all intervening states in $(i,  t]$ and underlies recent linear-time models \citep{gu2022efficiently, gu2024mamba}. It naturally encodes causal attenuation, since an impulse at position $i$ decays through the accumulated product of transitions, with the decay rate depending only on temporal distance.

However, prefix-product operators often lack content-based selection. They determine how strongly information persists but not which information to retain. Unlike softmax attention, $g^{\mathrm{prefix}}$ cannot weight observations by relevance to the current query, limiting its ability to model global structure that requires selecting important observations across the entire history.

\subsection{Group-Theoretic Decomposition: SO(2) and SPD}
\label{sec:so2-spd}
We decompose the mixing operator $g$ using two matrix groups that separate alignment from scaling.

\begin{definition}[SO(2)]

The special orthogonal group $\mathrm{SO}(2)$ consists of $2 \times 2$ rotation matrices:
\begin{equation}
\mathrm{SO}(2) = \left\{ R_\theta = \begin{bmatrix} \cos\theta & -\sin\theta \\ \sin\theta & \cos\theta \end{bmatrix} : \theta \in [0, 2\pi) \right\}.
\end{equation}
These satisfy $R_\theta^\top R_\theta = I$ and $\det(R_\theta) = 1$. Rotations compose additively: $R_\alpha R_\beta = R_{\alpha + \beta}$.
\end{definition}

\begin{definition}[SPD]
The symmetric positive definite cone $\mathrm{SPD}(d)$ consists of matrices $\Lambda \in \mathbb{R}^{d \times d}$ satisfying $\Lambda = \Lambda^\top$ and $z^\top \Lambda z > 0$ for all $z \neq 0$. We restrict to diagonal $\mathrm{SPD}(d)$: $\Lambda = \mathrm{diag}(\lambda_1, \ldots, \lambda_d)$ with $\lambda_i > 0$.
\end{definition}

$\mathrm{SO}(2)$ preserves inner products ($\|R_\theta x\| = \|x\|$) while diagonal $\mathrm{SPD}$ scales magnitudes. This motivates factoring the mixing operator as $g(\cdot, t, i) = R_t^\top \Lambda_{t,i} R_i$, where block-diagonal $R \in \mathrm{SO}(2)^{d/2}$ handles phase alignment and diagonal $\Lambda \in \mathrm{SPD}(d)$ handles temporal scaling. CAPS implements this factorization through three additive paths for $\Lambda$.

\subsection{Motivating Example: Global---Seasonal---Local Decomposition}
\label{sec:motivating-example}

We illustrate the limitations of existing attention mechanisms through a canonical decomposition from classical time series analysis. Consider a signal composed of three components:
\begin{equation}
y_t = \underbrace{\mu_0 + \beta t}_{\text{trend}} + \underbrace{a \cos(\omega t + \varphi)}_{\text{seasonal}} + \underbrace{\textstyle\sum_{i=1}^{t} \rho^{t-i} u_i}_{\text{peak}},
\label{eq:pts-decomp}
\end{equation}
where $\beta \in \mathbb{R}$, $\omega > 0$, $\rho \in (0,1)$, and $\{u_i\}$ are sparse impulses (Figure~\ref{fig:trend-variance}).

\begin{figure}[ht]
  \begin{center}
    \centerline{\includegraphics[width=\columnwidth,keepaspectratio]{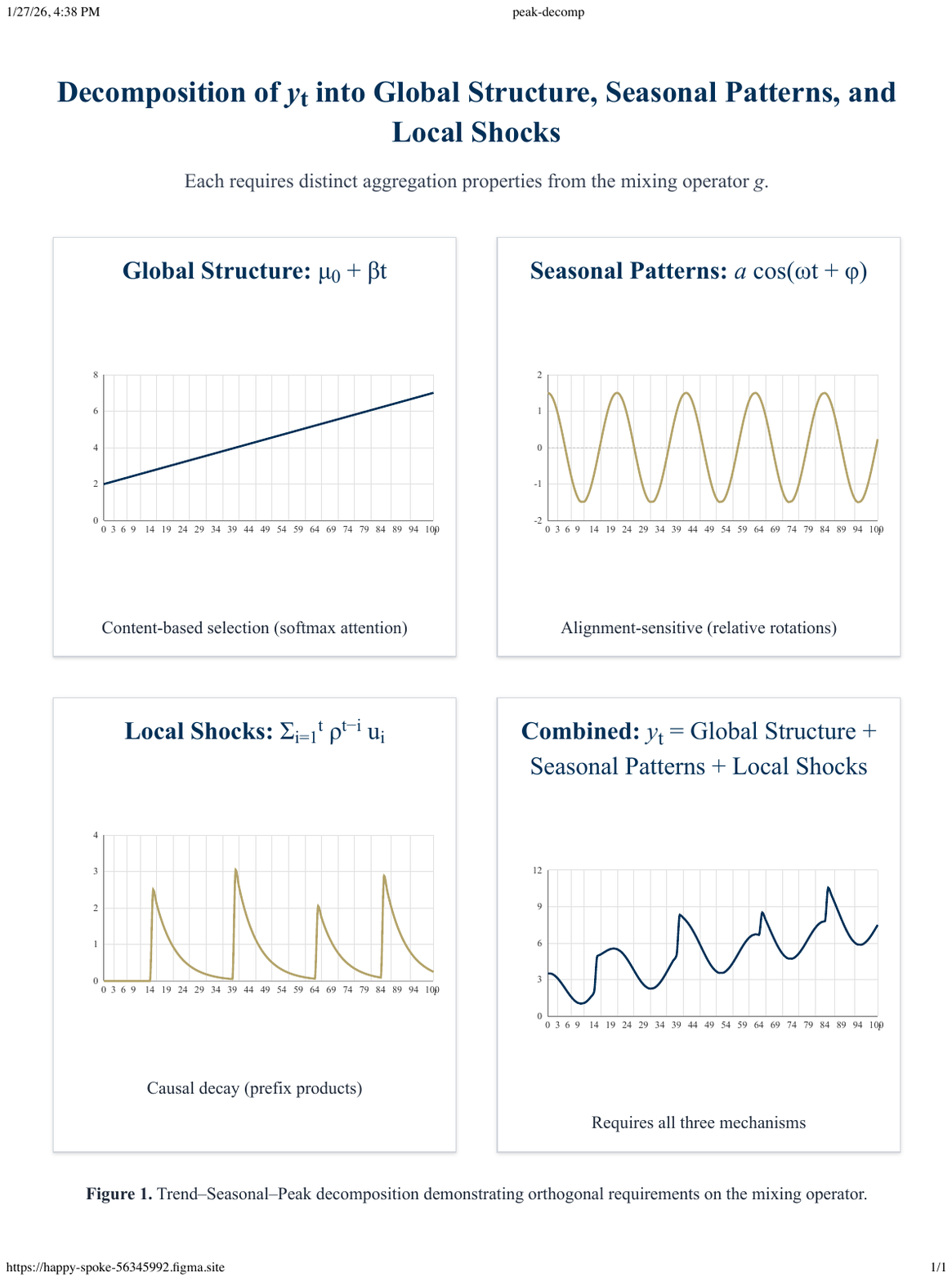}}
    \caption{Decomposition of $y_t$ into global, phase-dependent, and local components. Each requires distinct aggregation properties from the mixing operator $g$.}
    \label{fig:trend-variance}
  \end{center}
\end{figure}

Each component imposes orthogonal requirements on $g(\{x_j\}_{j=1}^t, t, i)$:
\begin{enumerate}[label=(\roman*)]
\item \textbf{Global Structure:} Estimating $\beta$ requires weighting observations by relevance, not temporal order, demanding content-based selection as in softmax attention. For instance, statistical trend estimation depends on a weighted slope, which softmax attention can recover through content-based weighting.
\item \textbf{Seasonal Patterns:} Correlation between $s_t$ and $s_i$ depends on phase offset $(t-i) \bmod \frac{2\pi}{\omega}$, requiring alignment-sensitive comparison via relative rotations.
\item \textbf{Local Shocks:} An impulse at $i^\star$ contributes $\rho^{t-i^\star} u_{i^\star}$ independently of all other tokens, requiring causal decay as in prefix products.
\end{enumerate}

\paragraph{Real-world example.} This structure is common in real-world time series data such as electricity consumption. The Electricity dataset we adopt is one such case, as long-term demand evolves smoothly due to demographic and economic factors, strong daily and weekly usage patterns induce phase-aligned seasonality, and short-lived events such as outages or behavioral shifts introduce transient shocks whose influence decays causally over time \cite{zhou2021informer}. Section \ref{sec:dataset-descriptions} further discusses our experimental datasets.

\paragraph{Why neither mechanism suffices alone.}
Softmax attention (Eq.~\ref{eq:softmax}) normalizes globally: the weight $\alpha_{t,i^\star} = \exp(q_t^\top k_{i^\star})/\sum_j \exp(q_t^\top k_j)$ depends on all tokens through the denominator, preventing token-independent exponential decay. Conversely, prefix products (Eq.~\ref{eq:prefix-g}) provide causal decay but depend only on temporal distance, precluding content-based trend estimation or alignment-sensitive comparison.

Our method resolves this by decomposing $g$ into three additive components: a Riemann softmax for order-independent aggregation, diagonal SPD prefix products for causal decay, and block-diagonal SO(2) rotations for alignment. Section \ref{sec:theoretical} formalizes this separation.

\section{Methodology}
\label{sec:methodology}

We introduce CAPS, a structured attention mechanism that decouples temporal alignment from temporal scaling. The mixing operator $g(\{x_j\}_{j=1}^t, t, i)$ in Eq.~\ref{eq:linear-op} factors as $R_t^\top \Lambda_{t,i} R_i$ following the decomposition in Section~\ref{sec:so2-spd}. CAPS implements this factorization through block-diagonal $\mathrm{SO}(2)^{d/2}$ rotations and three additive paths for the diagonal SPD scaling $\Lambda_{t,i}$, each addressing a component from Section~\ref{sec:motivating-example}. A learned Clock $\Delta_t > 0$ provides input-dependent temporal weights that couple these paths through a shared notion of temporal importance.

\subsection{CAPS Attention Layer}

\begin{figure*}[t]
  \centering
  \begin{subfigure}{0.59\textwidth}
    \centering
    \includegraphics[width=\textwidth]{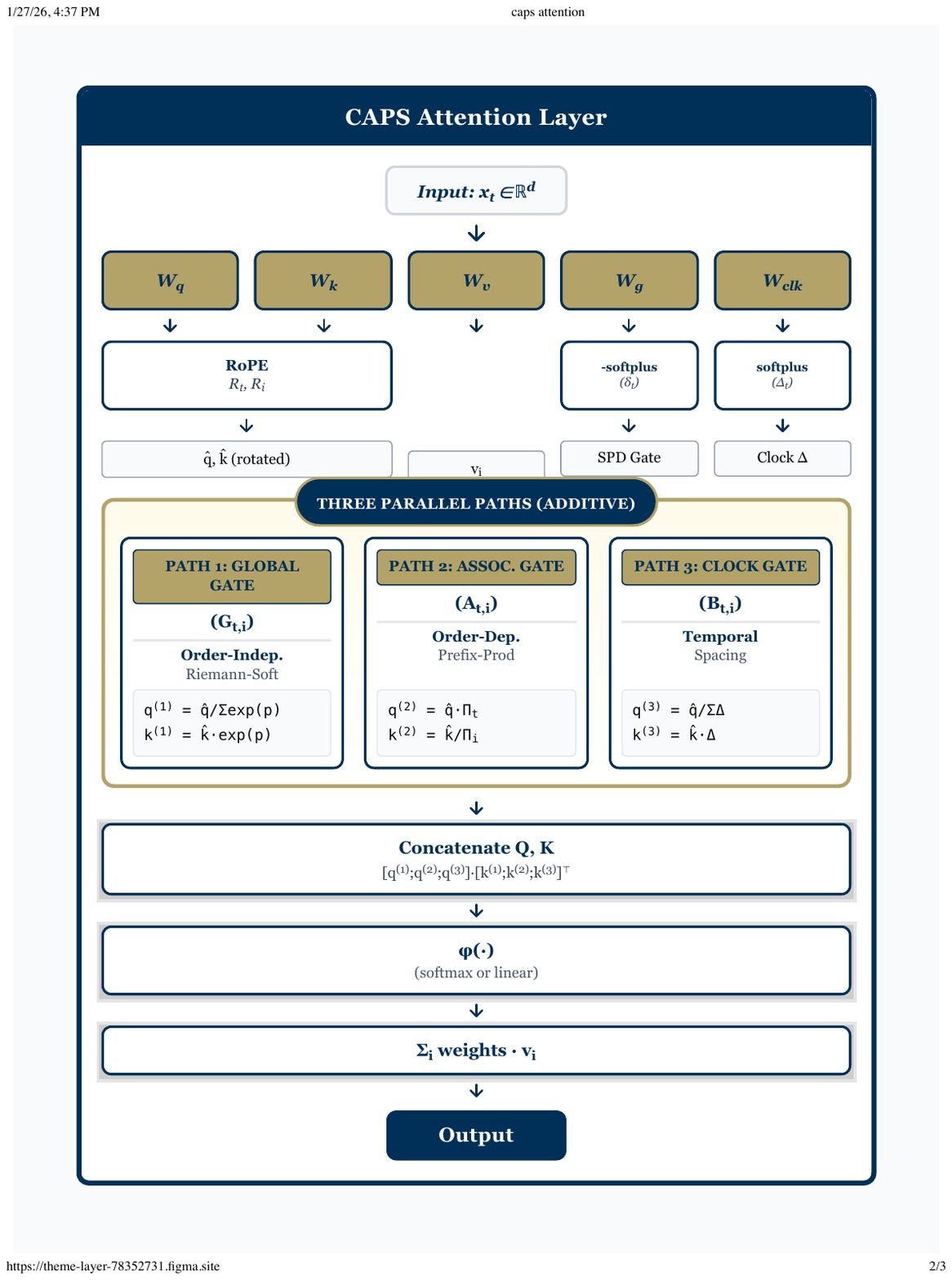}
    \caption{CAPS attention layer.}
    \label{fig:caps-attn}
  \end{subfigure}\hfill
  \begin{subfigure}{0.365\textwidth}
    \centering
    \includegraphics[width=\textwidth]{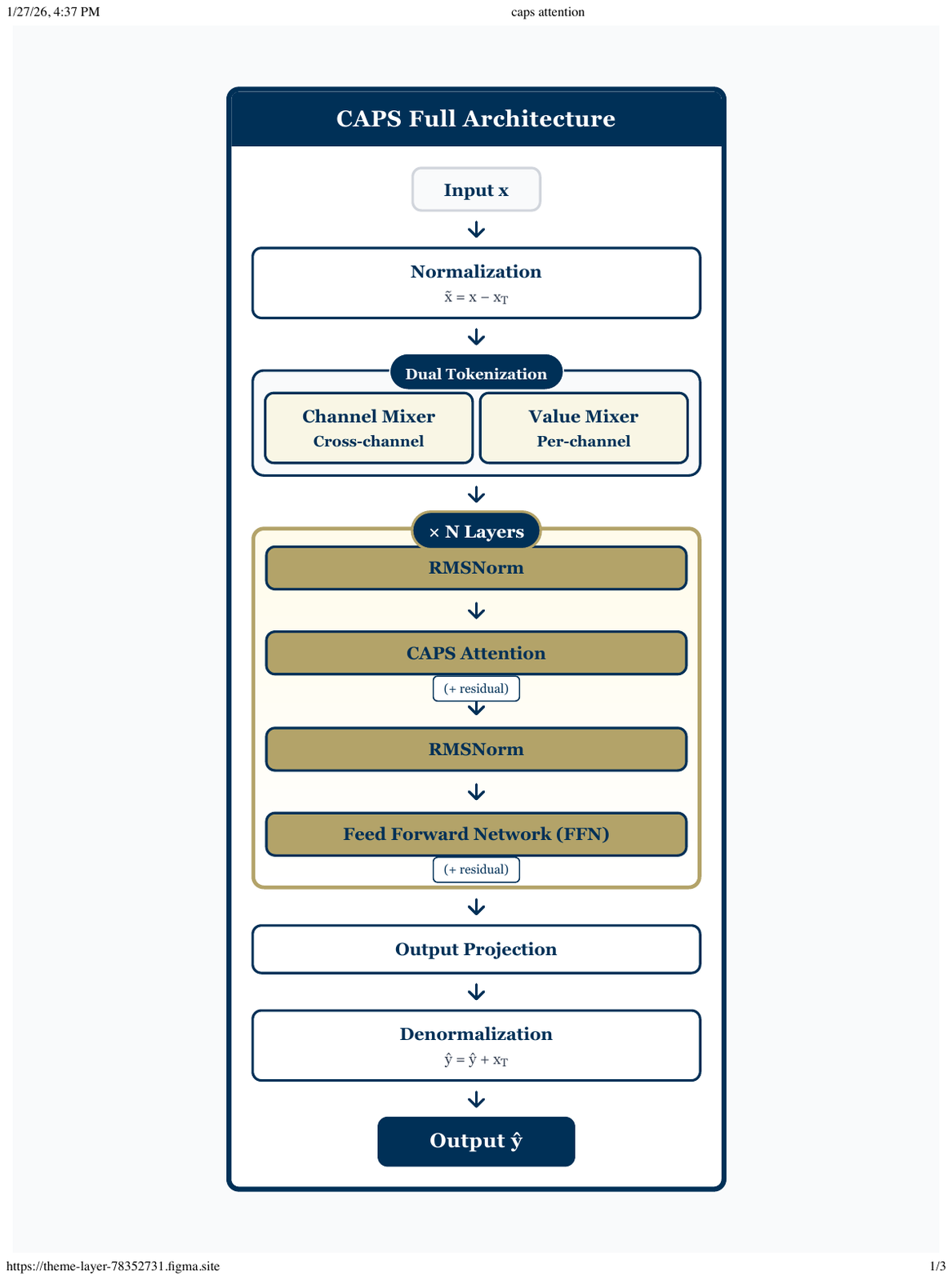}
    \caption{CAPS full architecture.}
    \label{fig:caps-arch}
  \end{subfigure}
  \caption{Overview of CAPS: (a) attention kernel, (b) full architecture.}
  \label{fig:caps-overview}
\end{figure*}

\paragraph{The Clock Mechanism.}

We introduce a learned Clock $\Delta_t \in \mathbb{R}^H_{>0}$ that assigns input-dependent temporal weights to each position:
\begin{equation}
\Delta_t = \mathrm{softplus}(W_c h_t) + \epsilon.
\label{eq:clock}
\end{equation}
where $W_c \in \mathbb{R}^{d \times H}$ is a learned projection (one output per head) and $\epsilon > 0$ is a small constant ensuring positivity. The Clock serves two roles. First, it provides a non-uniform temporal measure: intervals with larger $\Delta$ contribute more to cumulative quantities. Second, it couples the three scaling paths through a shared notion of temporal importance, enabling coherent multi-scale modeling. Each of $H$ attention heads receives independent Clock weights.

\paragraph{Temporal Alignment.}

Following RoPE \citep{su2024roformer}, each position $t$ receives rotation $R_t = \mathrm{blockdiag}(R_{t,1}, \ldots, R_{t,d/2})$ with
\begin{equation}
R_{t,\ell} = \begin{bmatrix} \cos(t\omega_\ell) & -\sin(t\omega_\ell) \\ \sin(t\omega_\ell) & \cos(t\omega_\ell) \end{bmatrix}
\end{equation}
and learned frequencies $\{\omega_\ell\}$. Rotated queries and keys $\hat{q}_t = R_t q_t$, $\hat{k}_i = R_i k_i$ satisfy $\hat{q}_t^\top \hat{k}_i = q_t^\top R_t^\top R_i k_i$, depending only on offset $(t-i)$. This captures phase relationships in the seasonal component: observations at the same cycle phase align regardless of absolute position.

We now decompose the SPD scaling $\Lambda_{t,i}$ into three additive components. Let $p_t = W_p h_t$ and $g_t = W_g h_t$ be learned gating signals.

\paragraph{Path 1: Riemann Softmax.}
For the trend component $\mu_t$, we require order-independent aggregation weighted by temporal importance. Define Clock-weighted scores $\tilde{p}_i = p_i + \log \Delta_i$ and the normalizing sum
\begin{equation}
E_t = \sum_{j=1}^t \exp(\tilde{p}_j - \tilde{p}_{\max}), \quad \tilde{p}_{\max} = \max_{j \le t} \tilde{p}_j.
\end{equation}
The first path transforms queries and keys as
\begin{equation}
q^{(1)}_t = \frac{\hat{q}_t}{E_t}, \qquad k^{(1)}_i = \exp(\tilde{p}_i - \tilde{p}_{\max}) \cdot \hat{k}_i.
\label{eq:path1}
\end{equation}
The induced weight $q^{(1)\top}_t k^{(1)}_i \propto \hat{q}_t^\top \hat{k}_i \cdot G_{t,i}$ recovers a \emph{Riemann softmax}:
\begin{equation}
G_{t,i} = \frac{\exp(p_i) \Delta_i}{\sum_{j=1}^t \exp(p_j) \Delta_j}.
\end{equation}
This generalizes standard softmax ($\Delta_i = 1$) to Clock-weighted aggregation.

\paragraph{Path 2: Prefix Product.}
For modeling local shocks, we require order-dependent decay, where an impulse at position $i$ contributes $\propto \rho^{t-i}$ independent of other tokens. This rules out any globally-normalized mechanism.

Let $\tilde{g}_j = -\mathrm{softplus}(g_j) \cdot \Delta_j < 0$ be Clock-weighted decay rates. Define the cumulative gate
\begin{equation}
\Gamma_t = \exp\left(\sum_{j=1}^t \tilde{g}_j\right).
\end{equation}
The second path transforms queries and keys as
\begin{equation}
q^{(2)}_t = \Gamma_t \cdot \hat{q}_t, \qquad k^{(2)}_i = \frac{\hat{k}_i}{\Gamma_i}.
\label{eq:path2}
\end{equation}f
The induced weight satisfies
\begin{equation}
q^{(2)\top}_t k^{(2)}_i = \hat{q}_t^\top \hat{k}_i \cdot \frac{\Gamma_t}{\Gamma_i} = \hat{q}_t^\top \hat{k}_i \cdot \exp\left(\sum_{j=i+1}^t \tilde{g}_j\right) = \hat{q}_t^\top \hat{k}_i \cdot A_{t,i}.
\end{equation}
The decay $A_{t,i}$ depends only on the interval $(i, t]$---it is decoupled from all tokens outside this interval. This resolves the fundamental limitation of softmax identified in Section~\ref{sec:motivating-example}.

\paragraph{Path 3: Clock Baseline.}
The third path provides content-agnostic temporal weighting:
\begin{equation}
q^{(3)}_t = \frac{\hat{q}_t}{\sum_{j=1}^t \Delta_j}, \qquad k^{(3)}_i = \Delta_i \cdot \hat{k}_i.
\label{eq:path3}
\end{equation}
The induced weight $q^{(3)\top}_t k^{(3)}_i \propto \hat{q}_t^\top \hat{k}_i \cdot \Delta_i / \sum_j \Delta_j$ encodes a learned prior over temporal importance, independent of query-key content.

\paragraph{Additive Composition.}

The three paths combine via concatenation:
\begin{equation}
q_{\mathrm{cat}} = [q^{(1)}; q^{(2)}; q^{(3)}], \qquad k_{\mathrm{cat}} = [k^{(1)}; k^{(2)}; k^{(3)}].
\end{equation}
The attention score decomposes additively:
\begin{equation}
\alpha_{t,i} \propto q_{\mathrm{cat},t}^\top k_{\mathrm{cat},i} = \underbrace{\vphantom{\sum} q^{(1)\top}_t k^{(1)}_i}_{\text{global } G_{t,i}} + \underbrace{\vphantom{\sum} q^{(2)\top}_t k^{(2)}_i}_{\text{causal } A_{t,i}} + \underbrace{\vphantom{\sum} q^{(3)\top}_t k^{(3)}_i}_{\text{baseline } B_{t,i}},
\label{eq:additive}
\end{equation}
with output $o_t = \sum_{i=1}^t \phi(\alpha_{t,i}) v_i$. Setting $\phi = \mathrm{softmax}$ yields quadratic complexity; setting $\phi = \mathrm{identity}$ (no normalization) yields linear complexity (Proposition~\ref{prop:complexity}). We report results with the linear variant in Table~\ref{tab:summarized-results}.

Each path addresses a component from Eq.~\ref{eq:pts-decomp}: Path 1 (Riemann softmax) captures trends via order-independent selection; Path 2 (prefix product)  captures peaks via token-independent causal decay; Path 3 (clock baseline) provides a temporal prior; RoPE captures seasonality via phase-sensitive alignment. The model learns appropriate combinations without manual specification.

\subsection{Architecture Summary}
The full architecture (as seen in \cref{fig:caps-arch}) applies last-value normalization, extends the input sequence to length $L+H$ via a learned linear layer, and uses dual tokenization combining cross-channel context with per-channel value embeddings. Following \citet{liu2024itransformer}, temporal processing operates independently per channel. We apply random-ratio channel dropout during training~\citep{lu2024cats}. Full details are provided in ~\cref{sec:architecture-details}.

\subsection{Complexity}

\begin{proposition}[Complexity]
\label{prop:complexity}
CAPS with linear attention has complexity $O(Td^2)$; with softmax attention, $O(T^2d)$. The three-path concatenation increases constants by factor $3$ without changing asymptotics.
\end{proposition}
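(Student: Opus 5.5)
We count operations per head and sum over the $H$ heads, with per-head dimension $d$; a constant number of heads changes only constants. The first step is to show that everything before the attention contraction is linear in $T$ and does not depend on whether $\phi$ is the identity or softmax. This covers:
\begin{itemize}
\item the projections $q_t, k_t, v_t$, which cost $O(d^2)$ per token, or $O(d\,d_x)$ in general;
\item the Clock $\Delta_t$ of Eq.~\ref{eq:clock} and the gating signals $p_t, g_t$, which cost $O(d)$ per token per head;
\item the rotations $R_t$, which are block-diagonal with $2\times 2$ blocks and so cost $O(d)$ per token;
\item the prefix quantities $E_t$, $\Gamma_t$ and $\sum_{j\le t}\Delta_j$, which are cumulative sums or cumulative log-sums.
\end{itemize}
The running maximum $\tilde p_{\max}$ is also a prefix scan, and each scan costs $O(T)$. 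Forming $q^{(m)}_t, k^{(m)}_i$ for $m=1,2,3$ via Eqs.~\ref{eq:path1}, \ref{eq:path2} and \ref{eq:path3} is a scalar rescaling of $\hat q_t, \hat k_i$, so it costs $O(d)$ each. In total the preprocessing is $O(Td^2)$ and is dominated by the projections. Concatenation then produces $q_{\mathrm{cat}}, k_{\mathrm{cat}} \in \mathbb{R}^{3d}$.

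\emph{Linear case.} With $\phi=\mathrm{identity}$, associativity gives
\[
o_t = \sum_{i\le t} \bigl(q_{\mathrm{cat},t}^\top k_{\mathrm{cat},i}\bigr) v_i = q_{\mathrm{cat},t}^\top S_t, \qquad S_t = \sum_{i\le t} k_{\mathrm{cat},i} v_i^\top \in \mathbb{R}^{3d\times d}.
\]
The state satisfies $S_t = S_{t-1} + k_{\mathrm{cat},t} v_t^\top$, so each update costs $O(3d\cdot d)$ and each readout costs $O(3d\cdot d)$. Over $T$ steps this is $O(3Td^2)=O(Td^2)$. The key check is that the time-dependent factors $1/E_t$, $\Gamma_t$ and $1/\sum_j\Delta_j$ sit entirely on the query side, while the $i$-dependent factors sit entirely on the key side. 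This separability is what keeps the recurrence valid.

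\emph{Softmax case.} Softmax needs the full score matrix $q_{\mathrm{cat},t}^\top k_{\mathrm{cat},i}$ for all $i\le t$, which costs $O(T^2\cdot 3d)$. The row normalizations cost $O(T^2)$ and the value aggregation costs $O(T^2 d)$, for a total of $O(T^2 d)$.

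In both cases the only change from single-path attention is that the inner dimension becomes $3d$ instead of $d$, which multiplies the dominant term by exactly $3$.

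The main obstacle is the normalizing sum $E_t$. It is built from the running maximum $\tilde p_{\max}$, and that maximum can change with $t$, which would rescale every previously formed $k^{(1)}_i$. Recomputing those keys naively would make the procedure quadratic. I would fix this by factoring out $\exp(-\tilde p_{\max})$: it appears in both the key scaling and $E_t$, so it cancels in $q^{(1)\top}_t k^{(1)}_i$. One can therefore maintain a state with a running stabilizer and rescale the accumulated $S_t$ and $E_t$ by $\exp(\tilde p^{\text{old}}_{\max}-\tilde p^{\text{new}}_{\max})$ whenever the maximum increases. Each such rescaling costs $O(d^2)$, so the linear bound survives. A similar remark covers $1/\Gamma_i$, which can be handled in log space or in chunks without changing the asymptotics.
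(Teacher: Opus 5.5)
Your proposal is correct and follows essentially the same route as the paper: $O(T)$ prefix scans for $E_t$, $\Gamma_t$ and $\sum_{j\le t}\Delta_j$, then a $3d\times d$ linear-attention state giving $O(Td^2)$, and an explicit score matrix giving $O(T^2 d)$ under softmax. Your extra checks fill in points the paper's proof glosses over: the query/key separability of the path factors, and the online rescaling of the Path~1 state when the running maximum $\tilde p_{\max}$ changes. One small correction: in the softmax case the overhead is at most a factor $3$, not exactly $3$, because the $O(T^2 d)$ value aggregation does not involve the concatenated dimension.
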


\begin{proof}
Computing $E_t$, $\Gamma_t$, and $\sum_{j=1}^{t}\Delta_j$ each requires $O(T)$ cumulative sums per head. Concatenation yields $q_{\mathrm{cat}}, k_{\mathrm{cat}} \in \mathbb{R}^{T \times H \times 3d_h}$ where $d_h = d/H$. Linear attention computes $\sum_i k_i v_i^\top \in \mathbb{R}^{3d_h \times d_h}$ in $O(T \cdot d_h^2)$ per head, totaling $O(T d^2 / H)$. Since $H$ is constant, this is $O(Td^2)$. Softmax computes $T \times T$ scores in $O(T^2 d_h)$ per head, totaling $O(T^2 d)$.
\end{proof}

\subsection{Theoretical Properties}
\label{sec:theoretical}

We formalize the limitations of softmax attention and establish that CAPS overcomes them.

\begin{proposition}[Softmax Couples All Tokens]
\label{prop:coupling}
Let $\alpha_{t,i} = \exp(s_{t,i})/\sum_{j=1}^{t} \exp(s_{t,j})$ be causal softmax attention. For any $i^\star < t$ and $j \neq i^\star$:
\[
\frac{\partial \alpha_{t,i^\star}}{\partial s_{t,j}} = -\alpha_{t,i^\star} \cdot \alpha_{t,j} \neq 0.
\]
Consequently, no softmax attention achieves token-independent decay $\alpha_{t,i^\star} = \rho^{t-i^\star}$.
\end{proposition}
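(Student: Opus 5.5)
The first claim is a direct quotient-rule computation, so I will do that first. Write $Z_t = \sum_{m=1}^{t} \exp(s_{t,m})$, so that $\alpha_{t,i^\star} = \exp(s_{t,i^\star})/Z_t$. For $j \neq i^\star$ the numerator does not depend on $s_{t,j}$, and $\partial Z_t/\partial s_{t,j} = \exp(s_{t,j})$. Hence
\[
\frac{\partial \alpha_{t,i^\star}}{\partial s_{t,j}} = -\frac{\exp(s_{t,i^\star})\exp(s_{t,j})}{Z_t^2} = -\alpha_{t,i^\star}\,\alpha_{t,j}.
\]
Because the exponential is strictly positive and $Z_t<\infty$, every weight satisfies $\alpha_{t,m} \in (0,1)$. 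The product is therefore strictly negative, and in particular nonzero, for every $j \le t$ with $j \neq i^\star$. The hypothesis $i^\star < t$ guarantees that at least one such $j$ exists, namely $j=t$.

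For the ``consequently'' part, I will formalize ``token-independent decay'' as the requirement that $\alpha_{t,i^\star} = \rho^{t-i^\star}$ hold identically, for all values of the other scores $\{s_{t,j}\}_{j\neq i^\star}$, or at least on an open set of them. Under that reading the left side must be constant in each $s_{t,j}$, so its partial derivative must vanish, which contradicts the first part. I will also add a second, non-differential argument to make the conclusion robust under a weaker reading. Normalization forces $\sum_{i=1}^t \alpha_{t,i} = 1$ for every $t$. If $\alpha_{t,i} = \rho^{t-i}$ held for all $i \le t$, then $\sum_{i=1}^t \rho^{t-i} = (1-\rho^t)/(1-\rho)$ would have to equal $1$ for all $t$. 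This fails for $t\ge 2$ with $\rho\in(0,1)$, since it would require $\rho^t = \rho$. In particular the weight on the current token would already have to equal $1$ (since $\rho^0=1$), which forces the earlier weights to be $0$ and contradicts strict positivity.

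The main obstacle is interpretive rather than computational: the statement ``no softmax attention achieves'' is informal. I will state explicitly that the impossibility concerns either (a) invariance of $\alpha_{t,i^\star}$ under perturbations of the other tokens' scores, handled by the derivative identity, or (b) a full geometric decay profile, handled by the normalization constraint. I will note that a single pointwise equality at one configuration is of course achievable, so it is not what is being ruled out. Finally, I will contrast this with Path 2, where $A_{t,i}$ depends only on $\tilde g_j$ for $j \in (i,t]$ and so has zero derivative with respect to tokens outside that interval.
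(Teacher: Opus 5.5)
Your proof is correct and follows the paper's argument. You compute $\partial\alpha_{t,i^\star}/\partial s_{t,j} = -\alpha_{t,i^\star}\alpha_{t,j}$ by the quotient rule, note that it is strictly negative by positivity of the exponential, and conclude that $\alpha_{t,i^\star}=\rho^{t-i^\star}$ cannot hold under all perturbations of the other scores, because that would force this derivative to vanish. Your supplementary normalization argument is valid and is not in the paper: $\alpha_{t,t}=\rho^0=1$ would force the earlier, strictly positive weights to be zero. It rules out the full geometric profile even at a single configuration, not only as an identity over all inputs.
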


\begin{proof}
Direct computation gives $\partial \alpha_{t,i^\star}/\partial s_{t,j} = -\exp(s_{t,i^\star})\exp(s_{t,j})/Z_t^2 = -\alpha_{t,i^\star}\alpha_{t,j} \neq 0$. If $\alpha_{t,i^\star} = \rho^{t-i^\star}$ held for all inputs, perturbing $s_{t,j}$ via $x_j$ would change $\alpha_{t,i^\star}$, a contradiction.
\end{proof}

\begin{proposition}[CAPS Achieves Decoupled Aggregation]
\label{prop:caps-achieves}
A single CAPS layer simultaneously provides:
\begin{enumerate}[label=(\roman*)]
    \item \textbf{Content-based selection:} Path~1 weights $G_{t,i} \propto \exp(p_i)\Delta_i$ have numerators depending only on position $i$, not on distance $(t-i)$.
    \item \textbf{Token-independent decay:} Path~2 weights $A_{t,i} = \exp\bigl(\sum_{j=i+1}^{t} \tilde{g}_j\bigr)$ depend only on the interval $(i,t]$.
    \item \textbf{Phase-sensitive alignment:} RoPE yields $\hat{q}_t^\top \hat{k}_i = q_t^\top R_{i-t} k_i$, depending only on offset $(i-t)$.
\end{enumerate}
\end{proposition}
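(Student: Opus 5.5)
The plan is to verify each of the three items directly from the definitions of Paths~1 and~2 and of the rotations $R_t$. No limiting argument is needed; each claim reduces to an algebraic identity.

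For (i), I will start from Eq.~\eqref{eq:path1}. I will expand $q^{(1)\top}_t k^{(1)}_i = \hat q_t^\top \hat k_i \cdot \exp(\tilde p_i - \tilde p_{\max})/E_t$ and multiply numerator and denominator by $\exp(\tilde p_{\max})$. The max-shift then cancels, leaving
\[
\frac{\exp(\tilde p_i)}{\sum_{j\le t}\exp(\tilde p_j)} = \frac{\exp(p_i)\Delta_i}{\sum_{j\le t}\exp(p_j)\Delta_j} = G_{t,i},
\]
using $\exp(\log\Delta_i)=\Delta_i$. The numerator $\exp(p_i)\Delta_i$ is a function of $h_i$ alone, since $p_i=W_p h_i$ and $\Delta_i$ comes from Eq.~\eqref{eq:clock}. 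Hence it carries no explicit dependence on $t-i$, and $t$ enters only through the shared normalizer. The one subtlety is that $\tilde p_{\max}$ depends on $t$. I will note that it appears identically in $q^{(1)}_t$ and in every $k^{(1)}_i$ used at step $t$, so it cancels exactly. In a streaming implementation it is the running max, and the same cancellation holds per query.

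For (ii), I will use Eq.~\eqref{eq:path2}. The quotient $\Gamma_t/\Gamma_i = \exp\bigl(\sum_{j\le t}\tilde g_j - \sum_{j\le i}\tilde g_j\bigr) = \exp\bigl(\sum_{j=i+1}^t \tilde g_j\bigr)$ for $i\le t$. Each $\tilde g_j$ depends only on $h_j$, so $A_{t,i}$ is a function of $\{h_j\}_{j\in(i,t]}$ only. I will contrast this with Proposition~\ref{prop:coupling} by noting that $\partial A_{t,i}/\partial h_m = 0$ for $m\notin(i,t]$. I will also note that a constant gate $\tilde g_j\equiv\log\rho$ recovers $A_{t,i}=\rho^{t-i}$, the token-independent decay that softmax cannot realize. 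Constant gates are attainable in the limit, e.g.\ with $W_g=0$ and $\Delta$ constant via $W_c=0$, though only $\rho$ values of the form $\exp(-\log 2\cdot\Delta)$ arise exactly; I will remark that the bias terms or the $\epsilon$ shift make any $\rho\in(0,1)$ reachable.

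For (iii), I will use block-diagonality and the definition's composition law $R_\alpha R_\beta=R_{\alpha+\beta}$. This gives $R_{t,\ell}^\top R_{i,\ell} = R_{-t\omega_\ell}R_{i\omega_\ell} = R_{(i-t)\omega_\ell}$ blockwise, hence $R_t^\top R_i = R_{i-t}$. It follows that $\hat q_t^\top\hat k_i = q_t^\top R_{i-t}k_i$. The main obstacle is only expository: stating (i) precisely, since "depends only on position $i$" is about the unnormalized score and not the normalized weight. I will phrase (i) in terms of the numerator, as the statement does.
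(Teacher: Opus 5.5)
Your proposal is correct and follows the paper's proof item by item. You cancel the max-shift to recover $G_{t,i}$ with the local numerator $\exp(p_i)\Delta_i$, telescope $\Gamma_t/\Gamma_i$ to $\exp\bigl(\sum_{j=i+1}^{t}\tilde g_j\bigr)$, and compose the blockwise rotations to get $R_t^\top R_i = R_{i-t}$, just in more detail than the paper's one-line arguments.

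Your two asides are not needed for the statement and are a little loose, so you can drop them. First, the paper's $g_t = W_g h_t$ and $\Delta_t = \mathrm{softplus}(W_c h_t)+\epsilon$ have no bias terms and a fixed $\epsilon$, so reaching every $\rho\in(0,1)$ is an implementation assumption. Second, a running max cancels only if the accumulated state is rescaled to the current query's max.
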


\begin{proof}
(i) By construction, $G_{t,i} = \exp(p_i)\Delta_i / \sum_{j=1}^{t}\exp(p_j)\Delta_j$; the numerator is local to position $i$.
(ii) $A_{t,i} = \Gamma_t / \Gamma_i = \exp(\sum_{j=i+1}^{t}\tilde{g}_j)$; tokens at positions $\ell \leq i$ do not appear.
(iii) Since $R_\theta^\top = R_{-\theta}$ and rotations compose additively, $R_t^\top R_i = R_{-t}R_i = R_{i-t}$.
\end{proof}

These three properties correspond precisely to the requirements identified in Section~\ref{sec:motivating-example}: (i)~for trend estimation, (ii)~for transient shocks, and (iii)~for seasonal alignment.

\section{Experiments}
\label{sec:experiments}

\begin{table*}[t]
  \caption{Summarized forecasting results (MSE). Results are averaged over $H \in \{96, 192, 336, 720\}$ for long-term datasets and $H \in \{12, 24, 48, 96\}$ for PEMS. \textbf{Bold}: best, \underline{underline}: second best. Full results are provided in \cref{tab:full-results}.}
  \label{tab:summarized-results}
  \begin{center}
    \begin{small}
      \setlength{\tabcolsep}{4pt}
      \resizebox{\textwidth}{!}{
      \begin{tabular}{c|c|c|c||c|c|c|c|c|c|c}
        \toprule
        \multicolumn{2}{c|}{\textbf{Model}} & \makecell{\textbf{CAPS} \\ \scriptsize (Ours)} & \makecell{\textbf{LinAttn+RoPE} \\ \scriptsize (Baseline)} & \makecell{\textbf{OLinear} \\ \scriptsize \citep{yue2025olinear}} & \makecell{\textbf{TimeMixer++} \\ \scriptsize \citep{wang2025timemixer}} & \makecell{\textbf{TimeMixer} \\ \scriptsize \citep{wang2024timemixer}} & \makecell{\textbf{iTransformer} \\ \scriptsize \citep{liu2024itransformer}} & \makecell{\textbf{PatchTST} \\ \scriptsize \citep{nie2023a}} & \makecell{\textbf{TimesNet} \\ \scriptsize \citep{wu2023timesnet}} & \makecell{\textbf{DLinear} \\ \scriptsize \citep{zeng2023transformers}} \\
        \midrule
        \multirow{7}{*}{\rotatebox{90}{Long-Term}} & Weather & \underline{0.235} & 0.253 & 0.237 & \textbf{0.226} & 0.240 & 0.258 & 0.265 & 0.259 & 0.265 \\
        & Solar & \textbf{0.198} & 0.215 & 0.215 & \underline{0.203} & 0.216 & 0.233 & 0.287 & 0.403 & 0.330 \\
        & Electricity & 0.177 & 0.183 & \textbf{0.159} & \underline{0.165} & 0.182 & 0.178 & 0.216 & 0.193 & 0.225 \\
        & ETTh1 & 0.425 & 0.463 & \underline{0.424} & \textbf{0.419} & 0.447 & 0.454 & 0.507 & 0.458 & 0.461 \\
        & ETTh2 & 0.387 & 0.414 & 0.367 & \textbf{0.339} & \underline{0.365} & 0.383 & 0.391 & 0.414 & 0.563 \\
        & ETTm1 & \textbf{0.368} & 0.387 & 0.375 & \underline{0.369} & 0.381 & 0.410 & 0.402 & 0.400 & 0.404 \\
        & ETTm2 & 0.277 & 0.282 & \underline{0.270} & \textbf{0.269} & 0.275 & 0.288 & 0.290 & 0.291 & 0.354 \\
        \midrule
        \multirow{3}{*}{\rotatebox{90}{Short}} & PEMS03 & \textbf{0.093} & 0.127 & \underline{0.096} & 0.165 & 0.167 & 0.113 & 0.180 & 0.147 & 0.278 \\
        & PEMS04 & \textbf{0.087} & 0.093 & \underline{0.091} & 0.136 & 0.185 & 0.111 & 0.195 & 0.129 & 0.295 \\
        & PEMS08 & \underline{0.115} & 0.136 & \textbf{0.113} & 0.201 & 0.226 & 0.150 & 0.280 & 0.193 & 0.379 \\
        \midrule
        \multicolumn{2}{c|}{Avg Rank} & \textbf{2.3} & 5.0 & \textbf{2.3} & 2.8 & 4.8 & 5.1 & 7.7 & 6.5 & 8.6 \\
        \multicolumn{2}{c|}{Top-1 Count} & \textbf{4} & 0 & 2 & \textbf{4} & 0 & 0 & 0 & 0 & 0 \\
        \bottomrule
      \end{tabular}
      }
    \end{small}
  \end{center}
\end{table*}

\subsection{Experimental Setup}

\paragraph{Datasets.} We train and evaluate on ten widely used real-world multivariate time series datasets spanning weather, energy, electricity load, transformer monitoring, and traffic forecasting domains, and covering both long-term and short-term forecasting settings. Section \ref{sec:dataset-descriptions} further describes these datasets and our rationale for selecting each.

\paragraph{Baselines.} We evaluate against seven strong baselines:  \textbf{Olinear} \cite{yue2025olinear}, \textbf{TimeMixer++} and \textbf{TimeMixer} \citep{wang2025timemixer}, \textbf{iTransformer} \citep{liu2024itransformer}, \textbf{PatchTST} \citep{nie2023a}, \textbf{TimesNet} \citep{wu2023timesnet}, and \textbf{DLinear} \cite{zeng2023transformers}. 

To isolate the contribution of our three-path gating mechanism, we also compare \textbf{CAPS (Linear)} against linear attention~\cite{katharopoulos2020transformers} with RoPE~\cite{su2024roformer} (denoted \textbf{LinAttn+RoPE}).  We present these results in Table \ref{tab:summarized-results}. In our ablation studies, we also compare \textbf{CAPS (Softmax)} against vanilla softmax attention \cite{vaswani2017attention} with RoPE \cite{su2024roformer} (denoted \textbf{Attn+RoPE}). 

We also provide comparison to Informer \cite{zhou2021informer} and Autoformer \cite{wu2021autoformer} in addition to many of our baselines in Table \ref{tab:efficiency-full}

\paragraph{Fair Comparison.} We use the widely used Time-Series-Library codebase\footnote{\url{https://github.com/thuml/Time-Series-Library}} \citep{wu2021autoformer,wu2023timesnet} and evaluate all methods under the same environment (see \cref{tab:summarized-results,tab:full-results,tab:ablation-main,tab:efficiency-full,tab:softmax-vs-linear,tab:ablation,tab:attention-ablation-full}). We note two exceptions. Because we  could not reproduce the results of \citet{wang2025timemixer}, within our compute budget, we source the results of TimeMixer++ from their paper. These numbers favor TimeMixer++. We also source the results of OLinear from their paper  \cite{yue2025olinear}.

\paragraph{Hyperparameter and Implementation Details.} In our main results (\cref{tab:summarized-results}) we report Mean Squared Error (MSE) to avoid bias, as we use MSE loss. We discuss all training and implementation details in \cref{sec:impl-details}. 

\subsection{Main Results}
\label{sec:main-exp}

\paragraph{Overall Performance.}
Table~\ref{tab:summarized-results} summarizes forecasting performance across ten datasets.
CAPS achieves an average rank of 2.3 with four first-place results, simultaneously matching the best average-rank performance (OLinear) and the best top-1 count (TimeMixer++). No other method attains best (tied) values on both metrics, indicating consistently strong performance across diverse temporal structures. In contrast, competing methods achieve strong performance on specific datasets but exhibit higher variance across benchmarks.

\paragraph{Isolating the Three-Path Mechanism.}
The comparison against LinAttn+RoPE directly measures the contribution of the three-path decomposition, as both models share identical backbone architecture and differ only in the attention kernel. CAPS outperforms this baseline on all ten datasets. On long-term benchmarks, MSE reductions range from 1.8\% (ETTm2) to 8.2\% (ETTh1); on short-term traffic data, improvements reach 6.5\% (PEMS04) to 26.8\% (PEMS03). These consistent gains confirm that the Riemann, prefix-product, and Clock paths drive performance improvements.

\paragraph{Long-term Forecasting.}
Across seven long-term datasets, CAPS achieves first place on Solar and ETTm1, and second place on Weather. TimeMixer++ and OLinear lead on the remaining datasets through frequency-domain decomposition and optimized linear projections respectively. These methods address temporal structure prior to or outside the mixing operation, while CAPS modifies the mixing operator itself. The approaches are complementary: CAPS attention could serve as the mixing mechanism within a multi-scale architecture, a direction we discuss in Section~\ref{sec:conclusion}.

\paragraph{Short-term Forecasting.}
On the PEMS traffic benchmarks, CAPS achieves the lowest MSE on PEMS03 and PEMS04, and second-lowest on PEMS08. Traffic data exhibits rapid transient dynamics where the prefix-product path provides appropriate inductive bias through token-independent causal decay. Frequency-domain methods show notably weaker performance: TimeMixer++ achieves 0.165 MSE on PEMS03 compared to 0.093 for CAPS.

\FloatBarrier
\subsection{Ablation Studies}

We ablate two design choices: the attention normalization (softmax vs.\ linear) and each gating path within the three-path kernel.

\paragraph{Softmax vs. Linear Attention.}
\begin{table}[t]
  \caption{Attention mechanism comparison (MSE averaged over $H \in \{96, 192, 336, 720\}$). Full results are provided in Table \ref{tab:attention-ablation-full}.}
  \label{tab:softmax-vs-linear}
  \begin{center}
    \begin{small}
      \setlength{\tabcolsep}{4pt}
      \resizebox{\columnwidth}{!}{
      \begin{tabular}{l|c|c|c|c|c}
        \toprule
        \textbf{Model} & \makecell{\textbf{CAPS} \\ \scriptsize (Linear)} & \makecell{\textbf{CAPS} \\ \scriptsize (Softmax)} & \makecell{\textbf{Attn} \\ \scriptsize +RoPE} & \makecell{\textbf{Pure} \\ \scriptsize Softmax} & \makecell{\textbf{LinAttn} \\ \scriptsize +RoPE} \\
        \midrule
        Weather & \textbf{0.235} & 0.253 & 0.244 & \underline{0.243} & 0.253 \\
        Solar & \textbf{0.198} & 0.212 & 0.214 & \underline{0.211} & 0.215 \\
        Electricity & \textbf{0.177} & 0.184 & \underline{0.179} & 0.182 & 0.183 \\
        ETTh1 & \textbf{0.425} & 0.441 & 0.449 & \underline{0.438} & 0.463 \\
        ETTh2 & \textbf{0.387} & \underline{0.400} & 0.401 & 0.415 & 0.414 \\
        ETTm1 & \textbf{0.368} & 0.389 & 0.379 & \underline{0.377} & 0.387 \\
        ETTm2 & \textbf{0.277} & 0.285 & 0.287 & 0.283 & \underline{0.282} \\
        \midrule
        Average & \textbf{0.295} & 0.309 & \underline{0.307} & \underline{0.307} & 0.314 \\
        \bottomrule
      \end{tabular}
      }
    \end{small}
  \end{center}
  \vspace{-1em}
\end{table}

Table~\ref{tab:softmax-vs-linear} compares CAPS under both attention variants. CAPS (Linear) outperforms CAPS (Softmax) on all seven datasets, with average MSE of 0.295 vs.\ 0.309. CAPS is built for linear attention. Each path incorporates its own normalization: Riemann softmax in Path 1, prefix products in Path 2, Clock weighting in Path 3, enabling the paths to operate independently. Wrapping CAPS in softmax introduces redundant normalization that re-couples the paths through a global denominator, negating the decoupling we design. This explains the asymmetry in Table~\ref{tab:softmax-vs-linear}: adding the three-path mechanism to linear attention reduces MSE by 6.1\% (LinAttn+RoPE 0.314 $\to$ CAPS 0.295), while adding it to softmax offers no performance gains overall (Pure Softmax 0.307 $\to$ CAPS Softmax 0.309).

\paragraph{Path Ablation.}

\begin{table}[t]
  \caption{Ablation study on CAPS components (MSE averaged over $H \in \{96, 192, 336, 720\}$). Full results are provided in Table \ref{tab:ablation}.}
  \label{tab:ablation-main}
  \begin{center}
    \begin{small}
      \setlength{\tabcolsep}{4pt}
      \resizebox{\columnwidth}{!}{
      \begin{tabular}{l|c|c|c|c}
        \toprule
        \textbf{Model} & \makecell{\textbf{CAPS} \\ \scriptsize (Full)} & \makecell{\textbf{w/o Path 1} \\ \scriptsize (Riemann)} & \makecell{\textbf{w/o Path 2} \\ \scriptsize (Prefix)} & \makecell{\textbf{w/o Path 3} \\ \scriptsize (Clock)} \\
        \midrule
        Weather & \textbf{0.235} & 0.246 & 0.247 & \underline{0.244} \\
        ETTh1 & \textbf{0.425} & \underline{0.426} & \underline{0.426} & 0.430 \\
        ETTh2 & \textbf{0.387} & \underline{0.389} & 0.390 & \underline{0.389} \\
        ETTm1 & \textbf{0.368} & \underline{0.371} & 0.373 & 0.373 \\
        ETTm2 & \textbf{0.277} & \underline{0.280} & 0.282 & 0.281 \\
        \midrule
        Avg $\Delta$ & -- & -1.17\% & -1.54\% & -1.46\% \\
        \bottomrule
      \end{tabular}
      }
    \end{small}
  \end{center}
  \vspace{-1em}
\end{table}

Table~\ref{tab:ablation-main} removes each path individually, demonstrating that all three paths contribute meaningfully. Average MSE degrades by 1.2\% (Path 1), 1.5\% (Path 2), and 1.5\% (Path 3) upon removal. The prefix-product path contributes most overall, consistent with its role in modeling transient dynamics. The Riemann path matters most on Weather, which exhibits smooth trends requiring order-independent aggregation, and the prefix-product path matters most on ETTm2, where short-horizon targets amplify the importance of local decay.

\FloatBarrier
\subsection{Computational Cost}

\begin{table}[t]
  \caption{Computational efficiency comparison on ETTm1 with input length $L=96$. FLOPs measures forward-pass computation (K=$10^3$, M=$10^6$, G=$10^9$); Params denotes learnable parameters.}
  \label{tab:efficiency-full}
  \begin{center}
    \begin{small}
      \setlength{\tabcolsep}{3pt}
      \resizebox{\columnwidth}{!}{
      \begin{tabular}{l|cc|cc|cc|cc}
        \toprule
        \multirow{2}{*}{\textbf{Model}} & \multicolumn{2}{c|}{$H=96$} & \multicolumn{2}{c|}{$H=192$} & \multicolumn{2}{c|}{$H=336$} & \multicolumn{2}{c}{$H=720$} \\
        & FLOPs & Params & FLOPs & Params & FLOPs & Params & FLOPs & Params \\
        \midrule
        CAPS \scriptsize{(Ours)} & 1.39G & 527K & 2.09G & 537K & 3.13G & 550K & 5.92G & 587K \\
        TimeMixer++ & 6.24G & 1.19M & 6.25G & 1.20M & 6.25G & 1.22M & 6.26G & 1.27M \\
        TimeMixer & 2.77G & 1.13M & 2.81G & 1.14M & 2.85G & 1.17M & 2.98G & 1.24M \\
        iTransformer & 210M & 9.56M & 211M & 9.61M & 213M & 9.68M & 217M & 9.88M \\
        PatchTST & 2.51G & 10.1M & 2.52G & 10.6M & 2.54G & 11.5M & 2.59G & 13.9M \\
        TimesNet & 2.26G & 1.19M & 3.38G & 1.20M & 5.06G & 1.22M & 9.57G & 1.25M \\
        DLinear & 259K & 18.6K & 517K & 37.2K & 905K & 65.2K & 1.94M & 140K \\
        Informer & 2.41G & 15.3M & 3.12G & 15.3M & 4.18G & 15.3M & 7.02G & 15.3M \\
        Autoformer & 2.98G & 13.7M & 3.69G & 13.7M & 4.76G & 13.7M & 7.60G & 13.7M \\
        \bottomrule
      \end{tabular}
      }
    \end{small}
  \end{center}
\end{table}
Table~\ref{tab:efficiency-full} reports FLOPs and parameter counts on ETTm1. CAPS requires 527K parameters, which is 18$\times$ fewer than iTransformer (9.56M) and PatchTST (10.1M), while achieving competitive accuracy. At $H=96$, CAPS uses 1.39G FLOPs, below TimeMixer (2.77G), TimesNet (2.26G), and TimeMixer++ (6.24G). Compared to DLinear (259K FLOPs, 18.6K parameters), CAPS incurs additional cost but reduces MSE by 8.9\% on ETTm1. The three-path construction increases constant factors over standard linear attention without changing asymptotic complexity.

\FloatBarrier
\section{Conclusion}
\label{sec:conclusion}

 We introduced CAPS, a structured attention mechanism that decouples global aggregation, causal decay, and phase alignment through three additive gating paths unified by a learned Clock. We observe that softmax attention couples all tokens through global normalization (Proposition~\ref{prop:coupling}), preventing token-independent decay for transient dynamics. Empirically, CAPS outperforms LinAttn+RoPE on all ten datasets and achieves competitive results against strong baselines, with average rank 2.3 and four first-place finishes in summarized evaluation. Ablations confirm that CAPS is optimized for linear attention, as the three-path mechanism improves linear attention by 6.1\% but does not improve softmax attention overall. Ablations further demonstrate that all three paths contribute, with removal degrading MSE by 1.2--1.5\%. CAPS admits linear complexity with 527K parameters, 18$\times$ fewer than iTransformer and PatchTST.

CAPS and frequency-domain methods such as TimeMixer++~\citep{wang2025timemixer} address temporal structure through complementary mechanisms: TimeMixer++ decomposes signal representations before mixing, while CAPS modifies the mixing operator itself. These approaches are orthogonal, and combining them by using CAPS attention within a multi-scale architecture is a natural direction. The linear variant also offers an interpretability advantage: the additive weight decomposition $w_{t,i} = w^{(1)}_{t,i} + w^{(2)}_{t,i} + w^{(3)}_{t,i}$ allows direct inspection of each path's contribution, mirroring classical STL decomposition~\citep{cleveland1990stl}. Future work could leverage this transparency to diagnose which temporal structures dominate in different forecasting regimes. A key limitation is that we have not explored large-scale pretraining, and whether CAPS scales to foundation-model regimes remains an open question.
 
\section*{Impact Statement}

This paper contributes to the field of Machine Learning, in particular, time series forecasting. The proposed CAPS attention mechanism has broad applications including financial decision-making, energy grid management, weather prediction, and traffic planning, which can contribute to resource efficiency and public safety. We do not anticipate specific negative societal consequences arising from this work, but it is important to ensure responsible deployment and oversight, especially in safety-critical domains to mitigate any potential risks or negative outcomes.


\bibliography{example_paper}
\bibliographystyle{icml2026}

\newpage
\appendix
\onecolumn
\section{Appendix}

\subsection{Dataset Descriptions}
\label{sec:dataset-descriptions}

We adopt ten widely used real-world multivariate time series datasets for our long- and short-term forecasting experiment settings. These datasets can be summarized as follows:

\paragraph{Weather Dataset \cite{wu2021autoformer}.} The Weather dataset contains 21 meteorological indicators such as air temperature and humidity, collected at ten-minute intervals throughout 2020 by the weather station of the Max Planck Institute for Biogeochemistry in Germany.

\paragraph{Solar Dataset \cite{lai2018modeling}.} The Solar dataset consists of high-resolution solar energy generation data from 137 photovoltaic power plants located in Alabama in 2006.

\paragraph{Electricity Dataset \cite{wu2021autoformer}.} The Electricity dataset reports hourly electricity consumption for 321 customers from 2012 to 2014. 

\paragraph{ETT Dataset \cite{zhou2021informer}.} The Electricity Transformer Temperature (ETT) dataset includes seven variables related load and oil temperature data from electricity transformers. We adopt four subsets (ETTm1, ETTm2, ETTh1, and ETTh2).

\paragraph{PEMS Dataset \cite{li2017diffusion}.} The PEMS dataset includes traffic data collected by California Transportation Agencies (CalTrans) Performance Measurement Systems (PeMS) at five-minute intervals. We adopt three subsets (PEMS03, PEMS04, and PEMS08) which have different numbers of variables.

\paragraph{Dataset Selection.} 
We exclude the Exchange dataset \cite{lai2018modeling}, which records daily exchange rates across eight currencies. In efficient markets, exchange rates are well-modeled as random walks (e.g., an AR(1) process with unit root) for which a last-value baseline is the best predictor \cite{fama1970efficient, rossi2013exchange, nie2023a}. Recent empirical results further confirm that this baseline is competitive with Transformer-based models on this dataset \cite{zeng2023transformers}. For these reasons, we omit the Exchange dataset, following common practice in recent forecasting studies.

We additionally omit the large-scale Traffic \cite{wu2021autoformer} and PEMS07 \cite{li2017diffusion} datasets, as we are unable to reproduce TimeMixer++ \cite{wang2025timemixer} results for these datasets within our computational budget. To ensure fair and consistent comparison across methods, we restrict evaluation to datasets for which all models can be reliably trained and evaluated under comparable settings.

\subsection{Hyperparameter Settings and Implementation Details}
\label{sec:impl-details}

We use $N=3$ Transformer encoder layers with $H=4$ attention heads. The hidden dimension $d_{\mathrm{model}}$ is set using dataset-specific endogenous and exogenous scaling factors: $f_{\mathrm{endo}}=f_{\mathrm{exo}}=64$ for ETTm1, ETTm2, Weather, Solar, Electricity, Traffic, and all PEMS datasets, and $f_{\mathrm{endo}}=f_{\mathrm{exo}}=8$ for ETTh1 and ETTh2. We use RMSNorm \cite{zhang2019root} as the normalization layer. We initialize the weights of all linear layers using a normal distribution with mean 0 and standard deviation 0.02; for output projection layers, we additionally scale the standard deviation by $1/\sqrt{2N}$ following GPT-2 conventions. All experiments use random seed 2026 and can be conducted on a single NVIDIA RTX4090 or NVIDIA L40S GPU. The base batch size is 32; for larger datasets (Electricity, Traffic, Solar, PEMS), we use batch sizes of 2--4 with 8--16 step gradient accumulation to maintain an effective batch size of 32. We train using MSE loss with the AdamW optimizer, betas $(0.9, 0.999)$, and a OneCycleLR scheduler. Weight decay is set to $10^{-5}$ for most datasets, increased to $0.1$ for ETTh1 and ETTh2, and disabled for high-dimensional datasets. Gradient clipping is applied with max norm 1, and early stopping patience is set to 12 epochs. Random Ratio Channel Dropout is enabled for low-dimensional datasets (ETT, Weather) and disabled for high-dimensional datasets where channel diversity provides sufficient regularization. We follow the train-validation-test splitting ratio of \citet{wang2025timemixer} and adapt the experimental environment inherited from Autoformer and Time-Series-Library \footnote{\url{https://github.com/thuml/Time-Series-Library} \citep{wu2023timesnet, wu2021autoformer}}.

\subsection{Full Architecture}
\label{sec:architecture-details}

\paragraph{Last Value Normalization.}
Each channel is centered by subtracting its final observed value $x_{c,L}$: $\tilde{x}_c = x_c - x_{c,L}$. After prediction, this shift is reversed: $\hat{y}_c = \tilde{y}_c + x_{c,L}$. This simple normalization handles level shifts and non-stationarity without requiring variance estimation.

\paragraph{Horizon Extension.}
The input sequence $x \in \mathbb{R}^{B \times C \times L}$ is extended to length $L + H$ via a learned linear layer $W_{\text{ext}} \in \mathbb{R}^{L \times H}$:
\begin{equation}
x_{\text{ext}} = [x; W_{\text{ext}}(x)] \in \mathbb{R}^{B \times C \times (L+H)}.
\end{equation}
This provides a learnable initialization for the forecasting horizon that the  model refines through temporal processing.

\paragraph{Dual Tokenization.}
Each position receives two token types. Channel tokens $c_t \in \mathbb{R}^{d_{\text{model}}}$ aggregate information across all input channels via $c_t = W_c x_t$ where $x_t \in \mathbb{R}^{C_{\text{in}}}$. Value tokens $v_{c,t} \in \mathbb{R}^{d_{\text{emb}}}$ encode per-channel magnitudes: $v_{c,t} = x_{c,t} \cdot V_c$ where $V_c \in \mathbb{R}^{1 \times d_{\text{emb}}}$ is a learned embedding. The representation $h_{c,t} = [c_t; v_{c,t}] \in \mathbb{R}^{d_{\text{model}} + d_{\text{emb}}}$ combines cross-channel context with channel-specific scaling.

\paragraph{Channel-Independent Processing.}
Following \citet{liu2024itransformer}, the temporal processor operates independently on each target channel by reshaping $(B, C, T, D) \to (BC, T, D)$. Temporal mixing weights are shared across channels while preserving channel-specific dynamics. Only target channels are decoded: $\hat{y}_{c,t} = h_{c,t} V_c^{\top}$ for $c \in [1, C_{\text{target}}]$ and $t \in [L+1, L+H]$.

\paragraph{Random Ratio Channel Dropout.}
Following common practice \cite{lu2024cats, lu2023arm}, during training, each sample independently drops a random fraction of input channels. For batch element $b$, we sample $r_b \sim \text{Uniform}(0,1)$ and mask each channel with probability $r_b$, rescaling survivors by $(1-r_b)^{-1}$. This prevents the channel tokenization from over-fitting to specific channel subsets and improves robustness on high-dimensional inputs.

\subsection{Full Experimental Results}
\label{sec:full-results}

\begin{table*}[ht]
  \caption{Full experimental results. All models use $L=96$. We report MSE and MAE (lower is better). \textbf{Bold}: best, \underline{underline}: second best.}
  \label{tab:full-results}
  \begin{center}
    \begin{small}
      \setlength{\tabcolsep}{4pt}
      \resizebox{\textwidth}{!}{
      \begin{tabular}{c|c|cc|cc||cc|cc|cc|cc|cc|cc|cc}
        \toprule
        \multicolumn{2}{c|}{\textbf{Model}} & \multicolumn{2}{c|}{\makecell{\textbf{CAPS} \\ \scriptsize (Ours)}} & \multicolumn{2}{c||}{\makecell{\textbf{LinAttn+RoPE} \\ \scriptsize (Baseline)}} & \multicolumn{2}{c|}{\makecell{\textbf{OLinear} \\ \scriptsize \citep{yue2025olinear}}} & \multicolumn{2}{c|}{\makecell{\textbf{TimeMixer++} \\ \scriptsize \citep{wang2025timemixer}}} & \multicolumn{2}{c|}{\makecell{\textbf{TimeMixer} \\ \scriptsize \citep{wang2024timemixer}}} & \multicolumn{2}{c|}{\makecell{\textbf{iTransformer} \\ \scriptsize \citep{liu2024itransformer}}} & \multicolumn{2}{c|}{\makecell{\textbf{PatchTST} \\ \scriptsize \citep{nie2023a}}} & \multicolumn{2}{c|}{\makecell{\textbf{TimesNet} \\ \scriptsize \citep{wu2023timesnet}}} & \multicolumn{2}{c}{\makecell{\textbf{DLinear} \\ \scriptsize \citep{zeng2023transformers}}} \\
        \midrule
        \multicolumn{2}{c|}{\textbf{Metric}} & MSE & MAE & MSE & MAE & MSE & MAE & MSE & MAE & MSE & MAE & MSE & MAE & MSE & MAE & MSE & MAE & MSE & MAE \\
        \midrule
        \multirow{5}{*}{\rotatebox{90}{Weather}}
        & 96 & \textbf{0.144} & \underline{0.202} & \underline{0.152} & 0.211 & 0.153 & \textbf{0.190} & 0.155 & 0.205 & 0.163 & 0.209 & 0.174 & 0.214 & 0.186 & 0.227 & 0.172 & 0.220 & 0.195 & 0.252 \\
& 192 & \textbf{0.198} & 0.261 & 0.210 & 0.270 & \underline{0.200} & \textbf{0.235} & 0.201 & \underline{0.245} & 0.208 & 0.250 & 0.221 & 0.254 & 0.234 & 0.265 & 0.219 & 0.261 & 0.237 & 0.295 \\
& 336 & \underline{0.251} & 0.300 & 0.280 & 0.329 & 0.258 & \underline{0.280} & \textbf{0.237} & \textbf{0.265} & 0.251 & 0.287 & 0.278 & 0.296 & 0.284 & 0.301 & 0.280 & 0.306 & 0.282 & 0.331 \\
& 720 & 0.348 & 0.381 & 0.369 & 0.378 & \underline{0.337} & \textbf{0.333} & \textbf{0.312} & \underline{0.334} & 0.339 & 0.341 & 0.358 & 0.347 & 0.356 & 0.349 & 0.365 & 0.359 & 0.345 & 0.382 \\
& Avg & \underline{0.235} & 0.286 & 0.253 & 0.297 & 0.237 & \textbf{0.260} & \textbf{0.226} & \underline{0.262} & 0.240 & 0.272 & 0.258 & 0.278 & 0.265 & 0.286 & 0.259 & 0.287 & 0.265 & 0.315 \\
\midrule
        \multirow{5}{*}{\rotatebox{90}{Solar}}
        & 96 & \textbf{0.160} & 0.247 & 0.187 & 0.245 & 0.179 & \textbf{0.191} & \underline{0.171} & \underline{0.231} & 0.189 & 0.259 & 0.203 & 0.237 & 0.265 & 0.323 & 0.373 & 0.358 & 0.290 & 0.378 \\
& 192 & \underline{0.200} & \underline{0.256} & \textbf{0.199} & 0.257 & 0.209 & \textbf{0.213} & 0.218 & 0.263 & 0.222 & 0.283 & 0.233 & 0.261 & 0.288 & 0.332 & 0.397 & 0.376 & 0.320 & 0.398 \\
& 336 & \underline{0.221} & 0.278 & 0.239 & 0.277 & 0.231 & \textbf{0.229} & \textbf{0.212} & \underline{0.269} & 0.231 & 0.292 & 0.248 & 0.273 & 0.301 & 0.339 & 0.420 & 0.380 & 0.353 & 0.415 \\
& 720 & \textbf{0.212} & 0.278 & 0.233 & 0.273 & 0.241 & \textbf{0.236} & \underline{0.212} & \underline{0.270} & 0.223 & 0.285 & 0.249 & 0.275 & 0.295 & 0.336 & 0.420 & 0.381 & 0.357 & 0.413 \\
& Avg & \textbf{0.198} & 0.265 & 0.215 & 0.263 & 0.215 & \textbf{0.217} & \underline{0.203} & \underline{0.258} & 0.216 & 0.280 & 0.233 & 0.262 & 0.287 & 0.333 & 0.403 & 0.374 & 0.330 & 0.401 \\
\midrule
        \multirow{5}{*}{\rotatebox{90}{Electricity}}
        & 96 & 0.146 & 0.250 & 0.146 & 0.251 & \textbf{0.131} & \textbf{0.221} & \underline{0.135} & \underline{0.222} & 0.153 & 0.247 & 0.148 & 0.240 & 0.190 & 0.296 & 0.168 & 0.272 & 0.210 & 0.302 \\
& 192 & 0.161 & 0.265 & 0.170 & 0.271 & \underline{0.150} & \underline{0.238} & \textbf{0.147} & \textbf{0.235} & 0.166 & 0.256 & 0.162 & 0.253 & 0.199 & 0.304 & 0.184 & 0.322 & 0.210 & 0.305 \\
& 336 & 0.179 & 0.286 & 0.184 & 0.287 & \underline{0.165} & \underline{0.254} & \textbf{0.164} & \textbf{0.245} & 0.185 & 0.277 & 0.178 & 0.269 & 0.217 & 0.319 & 0.198 & 0.300 & 0.223 & 0.319 \\
& 720 & 0.221 & 0.320 & 0.233 & 0.336 & \textbf{0.191} & \textbf{0.279} & \underline{0.212} & \underline{0.310} & 0.225 & 0.310 & 0.225 & 0.317 & 0.258 & 0.352 & 0.220 & 0.320 & 0.258 & 0.350 \\
& Avg & 0.177 & 0.280 & 0.183 & 0.286 & \textbf{0.159} & \textbf{0.248} & \underline{0.165} & \underline{0.253} & 0.182 & 0.273 & 0.178 & 0.270 & 0.216 & 0.318 & 0.193 & 0.304 & 0.225 & 0.319 \\
\midrule
        \multirow{5}{*}{\rotatebox{90}{ETTh1}}
        & 96 & 0.370 & \underline{0.398} & 0.385 & 0.411 & \textbf{0.360} & \textbf{0.382} & \underline{0.361} & 0.403 & 0.375 & 0.400 & 0.386 & 0.405 & 0.460 & 0.447 & 0.384 & 0.402 & 0.397 & 0.412 \\
& 192 & 0.421 & 0.430 & 0.441 & 0.447 & \textbf{0.416} & \textbf{0.414} & \underline{0.416} & 0.441 & 0.429 & \underline{0.421} & 0.441 & 0.512 & 0.477 & 0.429 & 0.436 & 0.429 & 0.446 & 0.441 \\
& 336 & \underline{0.449} & 0.447 & 0.489 & 0.474 & 0.457 & \underline{0.438} & \textbf{0.430} & \textbf{0.434} & 0.484 & 0.458 & 0.487 & 0.458 & 0.546 & 0.496 & 0.491 & 0.469 & 0.489 & 0.467 \\
& 720 & \textbf{0.458} & 0.471 & 0.537 & 0.519 & \underline{0.463} & \underline{0.462} & 0.467 & \textbf{0.451} & 0.498 & 0.482 & 0.503 & 0.491 & 0.544 & 0.517 & 0.521 & 0.500 & 0.513 & 0.510 \\
& Avg & 0.425 & 0.437 & 0.463 & 0.463 & \underline{0.424} & \textbf{0.424} & \textbf{0.419} & \underline{0.432} & 0.447 & 0.440 & 0.454 & 0.467 & 0.507 & 0.472 & 0.458 & 0.450 & 0.461 & 0.458 \\
\midrule
        \multirow{5}{*}{\rotatebox{90}{ETTh2}}
        & 96 & 0.287 & 0.348 & 0.295 & 0.349 & \underline{0.284} & \underline{0.329} & \textbf{0.276} & \textbf{0.328} & 0.289 & 0.341 & 0.297 & 0.349 & 0.308 & 0.355 & 0.340 & 0.374 & 0.340 & 0.394 \\
& 192 & 0.382 & 0.407 & 0.407 & 0.420 & \underline{0.360} & \textbf{0.379} & \textbf{0.342} & \underline{0.379} & 0.372 & 0.392 & 0.380 & 0.400 & 0.393 & 0.405 & 0.402 & 0.414 & 0.482 & 0.479 \\
& 336 & 0.441 & 0.452 & 0.464 & 0.465 & 0.409 & 0.415 & \textbf{0.346} & \textbf{0.398} & \underline{0.386} & \underline{0.414} & 0.428 & 0.432 & 0.427 & 0.436 & 0.452 & 0.452 & 0.591 & 0.541 \\
& 720 & 0.439 & 0.471 & 0.491 & 0.503 & 0.415 & \underline{0.431} & \textbf{0.392} & \textbf{0.415} & \underline{0.412} & 0.434 & 0.427 & 0.445 & 0.436 & 0.450 & 0.462 & 0.468 & 0.839 & 0.661 \\
& Avg & 0.387 & 0.420 & 0.414 & 0.434 & 0.367 & \underline{0.389} & \textbf{0.339} & \textbf{0.380} & \underline{0.365} & 0.395 & 0.383 & 0.407 & 0.391 & 0.412 & 0.414 & 0.427 & 0.563 & 0.519 \\
\midrule
        \multirow{5}{*}{\rotatebox{90}{ETTm1}}
       & 96 & \textbf{0.297} & 0.346 & 0.314 & 0.364 & \underline{0.302} & \textbf{0.334} & 0.310 & \underline{0.334} & 0.320 & 0.357 & 0.334 & 0.368 & 0.352 & 0.374 & 0.338 & 0.375 & 0.346 & 0.374 \\
& 192 & \textbf{0.347} & 0.382 & 0.363 & 0.394 & 0.357 & \underline{0.363} & \underline{0.348} & \textbf{0.362} & 0.361 & 0.381 & 0.390 & 0.393 & 0.374 & 0.387 & 0.374 & 0.387 & 0.382 & 0.391 \\
& 336 & 0.388 & 0.409 & 0.408 & 0.427 & \underline{0.387} & \textbf{0.385} & \textbf{0.376} & \underline{0.391} & 0.390 & 0.404 & 0.426 & 0.420 & 0.421 & 0.414 & 0.410 & 0.411 & 0.415 & 0.415 \\
& 720 & \underline{0.441} & 0.446 & 0.462 & 0.456 & 0.452 & \underline{0.426} & \textbf{0.440} & \textbf{0.423} & 0.454 & 0.441 & 0.491 & 0.459 & 0.462 & 0.449 & 0.478 & 0.450 & 0.473 & 0.451 \\
& Avg & \textbf{0.368} & 0.396 & 0.387 & 0.410 & 0.375 & \textbf{0.377} & \underline{0.369} & \underline{0.378} & 0.381 & 0.396 & 0.410 & 0.410 & 0.402 & 0.406 & 0.400 & 0.406 & 0.404 & 0.408 \\
\midrule
        \multirow{5}{*}{\rotatebox{90}{ETTm2}}
        & 96 & \textbf{0.166} & 0.254 & 0.171 & 0.259 & \underline{0.169} & \underline{0.249} & 0.170 & \textbf{0.245} & 0.175 & 0.258 & 0.180 & 0.264 & 0.183 & 0.270 & 0.187 & 0.267 & 0.193 & 0.293 \\
& 192 & \underline{0.231} & 0.299 & 0.239 & 0.301 & 0.232 & \textbf{0.290} & \textbf{0.229} & \underline{0.291} & 0.237 & 0.299 & 0.250 & 0.309 & 0.255 & 0.314 & 0.249 & 0.309 & 0.284 & 0.361 \\
& 336 & \underline{0.298} & 0.345 & 0.301 & 0.348 & \textbf{0.291} & \textbf{0.328} & 0.303 & 0.343 & 0.298 & \underline{0.340} & 0.311 & 0.348 & 0.309 & 0.347 & 0.321 & 0.351 & 0.382 & 0.429 \\
& 720 & 0.412 & 0.417 & 0.417 & 0.417 & \underline{0.389} & \textbf{0.387} & \textbf{0.373} & 0.399 & 0.391 & \underline{0.396} & 0.412 & 0.407 & 0.412 & 0.404 & 0.408 & 0.403 & 0.558 & 0.525 \\
& Avg & 0.277 & 0.329 & 0.282 & 0.331 & \underline{0.270} & \textbf{0.314} & \textbf{0.269} & \underline{0.320} & 0.275 & 0.323 & 0.288 & 0.332 & 0.290 & 0.334 & 0.291 & 0.333 & 0.354 & 0.402 \\
\midrule
        \multirow{5}{*}{\rotatebox{90}{PEMS03}}
        & 12 & \underline{0.061} & \underline{0.165} & 0.074 & 0.182 & \textbf{0.060} & \textbf{0.159} & 0.097 & 0.208 & 0.076 & 0.188 & 0.071 & 0.174 & 0.099 & 0.216 & 0.085 & 0.192 & 0.122 & 0.243 \\
& 24 & \underline{0.079} & \underline{0.183} & 0.110 & 0.220 & \textbf{0.078} & \textbf{0.179} & 0.120 & 0.230 & 0.113 & 0.226 & 0.093 & 0.201 & 0.142 & 0.259 & 0.118 & 0.223 & 0.201 & 0.317 \\
& 48 & \textbf{0.096} & \textbf{0.208} & 0.158 & 0.260 & \underline{0.104} & \underline{0.210} & 0.170 & 0.272 & 0.191 & 0.292 & 0.125 & 0.236 & 0.211 & 0.319 & 0.155 & 0.260 & 0.228 & 0.317 \\
& 96 & \textbf{0.135} & \underline{0.258} & 0.167 & 0.264 & \underline{0.140} & \textbf{0.247} & 0.274 & 0.342 & 0.288 & 0.363 & 0.164 & 0.275 & 0.269 & 0.370 & 0.228 & 0.317 & 0.457 & 0.515 \\
& Avg & \textbf{0.093} & \underline{0.204} & 0.127 & 0.232 & \underline{0.096} & \textbf{0.199} & 0.165 & 0.263 & 0.167 & 0.267 & 0.113 & 0.222 & 0.180 & 0.291 & 0.147 & 0.248 & 0.278 & 0.375 \\
\midrule
        \multirow{5}{*}{\rotatebox{90}{PEMS04}}
        & 12 & \underline{0.069} & \underline{0.167} & 0.074 & 0.186 & \textbf{0.068} & \textbf{0.163} & 0.099 & 0.214 & 0.092 & 0.204 & 0.078 & 0.183 & 0.105 & 0.224 & 0.087 & 0.195 & 0.148 & 0.272 \\
& 24 & \textbf{0.078} & \underline{0.185} & 0.092 & 0.205 & \underline{0.079} & \textbf{0.176} & 0.115 & 0.231 & 0.128 & 0.243 & 0.095 & 0.205 & 0.153 & 0.275 & 0.103 & 0.215 & 0.224 & 0.340 \\
& 48 & \textbf{0.091} & \underline{0.199} & 0.097 & 0.209 & \underline{0.095} & \textbf{0.197} & 0.144 & 0.261 & 0.213 & 0.315 & 0.120 & 0.233 & 0.229 & 0.339 & 0.136 & 0.250 & 0.355 & 0.437 \\
& 96 & \textbf{0.108} & \textbf{0.221} & \underline{0.109} & \underline{0.224} & 0.122 & 0.226 & 0.185 & 0.297 & 0.307 & 0.384 & 0.150 & 0.262 & 0.291 & 0.389 & 0.190 & 0.303 & 0.452 & 0.504 \\
& Avg & \textbf{0.087} & \underline{0.193} & 0.093 & 0.206 & \underline{0.091} & \textbf{0.191} & 0.136 & 0.251 & 0.185 & 0.287 & 0.111 & 0.221 & 0.195 & 0.307 & 0.129 & 0.241 & 0.295 & 0.388 \\
\midrule
        \multirow{5}{*}{\rotatebox{90}{PEMS08}}
        & 12 & \underline{0.069} & \underline{0.170} & 0.074 & 0.175 & \textbf{0.068} & \textbf{0.159} & 0.119 & 0.222 & 0.091 & 0.201 & 0.079 & 0.182 & 0.168 & 0.232 & 0.112 & 0.212 & 0.154 & 0.276 \\
& 24 & \underline{0.091} & \underline{0.193} & 0.117 & 0.223 & \textbf{0.089} & \textbf{0.178} & 0.149 & 0.249 & 0.137 & 0.246 & 0.115 & 0.219 & 0.224 & 0.281 & 0.141 & 0.238 & 0.248 & 0.353 \\
& 48 & \textbf{0.119} & \underline{0.211} & 0.146 & 0.252 & \underline{0.123} & \textbf{0.204} & 0.206 & 0.292 & 0.265 & 0.343 & 0.186 & 0.235 & 0.321 & 0.354 & 0.198 & 0.283 & 0.440 & 0.470 \\
& 96 & \underline{0.181} & \underline{0.259} & 0.206 & 0.277 & \textbf{0.173} & \textbf{0.236} & 0.329 & 0.355 & 0.410 & 0.407 & 0.221 & 0.267 & 0.408 & 0.417 & 0.320 & 0.351 & 0.674 & 0.565 \\
& Avg & \underline{0.115} & \underline{0.208} & 0.136 & 0.232 & \textbf{0.113} & \textbf{0.194} & 0.201 & 0.280 & 0.226 & 0.299 & 0.150 & 0.226 & 0.280 & 0.321 & 0.193 & 0.271 & 0.379 & 0.416 \\
        \midrule
        \multicolumn{2}{c|}{Avg Rank} & 2.3 & 3.7 & 4.8 & 5.6 & 2.1 & 1.4 & 3.2 & 3.3 & 4.7 & 4.4 & 5.3 & 4.5 & 7.4 & 7.1 & 6.4 & 6.1 & 8.3 & 8.5 \\
\multicolumn{2}{c|}{Top-1} & 14 & 2 & 1 & 0 & 11 & 27 & 16 & 13 & 0 & 0 & 0 & 0 & 0 & 0 & 0 & 0 & 0 & 0 \\
        \bottomrule
      \end{tabular}
      }
    \end{small}
  \end{center}
\end{table*}

\begin{table*}[ht]
  \caption{Ablation study on CAPS components. All models use $L=96$. \textbf{Bold}: best, \underline{underline}: second best.}
  \label{tab:ablation}
  \begin{center}
    \begin{small}
      \setlength{\tabcolsep}{4pt}
      \begin{tabular}{c|c|cc|cc|cc|cc}
        \toprule
        \multicolumn{2}{c|}{\textbf{Model}} & \multicolumn{2}{c|}{\makecell{\textbf{CAPS} \\ \scriptsize (Full)}} & \multicolumn{2}{c|}{\makecell{\textbf{w/o Path 1} \\ \scriptsize (Riemann)}} & \multicolumn{2}{c|}{\makecell{\textbf{w/o Path 2} \\ \scriptsize (Prefix)}} & \multicolumn{2}{c}{\makecell{\textbf{w/o Path 3} \\ \scriptsize (Clock)}} \\
        \midrule
        \multicolumn{2}{c|}{\textbf{Metric}} & MSE & MAE & MSE & MAE & MSE & MAE & MSE & MAE \\
        \midrule
        \multirow{5}{*}{\rotatebox{90}{Weather}} & 96 & \textbf{0.144} & \textbf{0.202} & 0.148 & \underline{0.204} & 0.145 & \underline{0.204} & \textbf{0.144} & \underline{0.204} \\
        & 192 & \textbf{0.198} & \underline{0.261} & \textbf{0.198} & \textbf{0.258} & \textbf{0.198} & 0.263 & 0.204 & 0.263 \\
        & 336 & \textbf{0.251} & \textbf{0.300} & 0.273 & 0.325 & 0.276 & \underline{0.316} & \underline{0.268} & 0.321 \\
        & 720 & \textbf{0.348} & \underline{0.381} & 0.365 & \underline{0.381} & 0.368 & 0.385 & \underline{0.360} & \textbf{0.371} \\
        & Avg & \textbf{0.235} & \textbf{0.286} & 0.246 & 0.292 & 0.247 & 0.292 & \underline{0.244} & \underline{0.290} \\
        \midrule
        \multirow{5}{*}{\rotatebox{90}{ETTh1}} & 96 & \textbf{0.370} & 0.398 & 0.371 & \textbf{0.393} & 0.371 & 0.394 & \textbf{0.370} & \textbf{0.393} \\
        & 192 & 0.421 & 0.430 & \textbf{0.419} & \textbf{0.424} & \textbf{0.419} & \underline{0.425} & 0.420 & 0.427 \\
        & 336 & \textbf{0.449} & 0.447 & 0.453 & 0.449 & \underline{0.452} & \textbf{0.443} & 0.453 & \underline{0.444} \\
        & 720 & \textbf{0.458} & 0.471 & 0.462 & \textbf{0.464} & \underline{0.461} & \underline{0.465} & 0.478 & 0.472 \\
        & Avg & \textbf{0.425} & 0.437 & \underline{0.426} & \underline{0.433} & \underline{0.426} & \textbf{0.432} & 0.430 & 0.434 \\
        \midrule
        \multirow{5}{*}{\rotatebox{90}{ETTh2}} & 96 & \textbf{0.287} & 0.348 & \underline{0.292} & \textbf{0.345} & 0.294 & \underline{0.346} & 0.295 & 0.349 \\
        & 192 & \textbf{0.382} & 0.407 & 0.385 & \textbf{0.401} & \underline{0.384} & 0.403 & 0.388 & \textbf{0.401} \\
        & 336 & \underline{0.441} & 0.452 & 0.443 & \textbf{0.444} & 0.444 & 0.457 & \textbf{0.436} & \underline{0.446} \\
        & 720 & 0.439 & 0.471 & \textbf{0.434} & \textbf{0.455} & 0.439 & 0.459 & \underline{0.436} & \textbf{0.455} \\
        & Avg & \textbf{0.387} & 0.420 & \underline{0.389} & \textbf{0.411} & 0.390 & 0.416 & \underline{0.389} & \underline{0.413} \\
        \midrule
        \multirow{5}{*}{\rotatebox{90}{ETTm1}} & 96 & \textbf{0.297} & \textbf{0.346} & \underline{0.305} & \underline{0.353} & 0.309 & 0.359 & 0.310 & 0.358 \\
        & 192 & 0.347 & 0.382 & \textbf{0.345} & \textbf{0.381} & 0.347 & 0.382 & \underline{0.346} & \textbf{0.381} \\
        & 336 & \underline{0.388} & \textbf{0.409} & \underline{0.388} & \underline{0.410} & \underline{0.388} & \underline{0.410} & \textbf{0.385} & \underline{0.410} \\
        & 720 & \textbf{0.441} & \textbf{0.446} & \underline{0.447} & 0.448 & 0.448 & \textbf{0.446} & 0.450 & \textbf{0.446} \\
        & Avg & \textbf{0.368} & \textbf{0.396} & \underline{0.371} & \underline{0.398} & 0.373 & 0.399 & 0.373 & 0.399 \\
        \midrule
        \multirow{5}{*}{\rotatebox{90}{ETTm2}} & 96 & \textbf{0.166} & \textbf{0.254} & \underline{0.168} & 0.258 & 0.171 & 0.260 & 0.172 & \underline{0.257} \\
        & 192 & \textbf{0.231} & \textbf{0.299} & 0.237 & 0.306 & 0.242 & 0.309 & \underline{0.236} & \underline{0.304} \\
        & 336 & \textbf{0.298} & \textbf{0.345} & \underline{0.300} & \underline{0.346} & \underline{0.300} & \underline{0.346} & 0.302 & 0.354 \\
        & 720 & \textbf{0.412} & \textbf{0.417} & \underline{0.414} & \textbf{0.417} & 0.416 & 0.418 & \underline{0.414} & 0.418 \\
        & Avg & \textbf{0.277} & \textbf{0.329} & \underline{0.280} & \underline{0.332} & 0.282 & 0.333 & 0.281 & 0.333 \\
        \midrule
        \multicolumn{2}{c|}{Avg $\Delta$} & \multicolumn{2}{c|}{--} & -1.17\% & 0.05\% & -1.54\% & -0.32\% & -1.46\% & -0.11\% \\
        \bottomrule
      \end{tabular}
    \end{small}
  \end{center}
\end{table*}

\begin{table*}[ht]
  \caption{Attention mechanism comparison. All models use $L=96$. \textbf{Bold}: best, \underline{underline}: second best.}
  \label{tab:attention-ablation-full}
  \begin{center}
    \begin{small}
      \setlength{\tabcolsep}{3pt}
      \begin{tabular}{c|c|cc|cc|cc|cc|cc}
        \toprule
        \multicolumn{2}{c|}{\textbf{Model}} & \multicolumn{2}{c|}{\makecell{\textbf{CAPS} \\ \scriptsize (Linear)}} & \multicolumn{2}{c|}{\makecell{\textbf{CAPS} \\ \scriptsize (Softmax)}} & \multicolumn{2}{c|}{\makecell{\textbf{Attn} \\ \scriptsize +RoPE}} & \multicolumn{2}{c|}{\makecell{\textbf{Pure} \\ \scriptsize Softmax}} & \multicolumn{2}{c}{\makecell{\textbf{LinAttn} \\ \scriptsize +RoPE}} \\
        \midrule
        \multicolumn{2}{c|}{\textbf{Metric}} & MSE & MAE & MSE & MAE & MSE & MAE & MSE & MAE & MSE & MAE \\
        \midrule
        \multirow{5}{*}{\rotatebox{90}{Weather}} & 96 & \textbf{0.144} & \textbf{0.202} & 0.157 & 0.221 & \underline{0.146} & \underline{0.206} & \underline{0.146} & 0.209 & 0.152 & 0.211 \\
        & 192 & \textbf{0.198} & \underline{0.261} & 0.211 & 0.273 & 0.201 & 0.262 & \textbf{0.198} & \textbf{0.260} & 0.210 & 0.270 \\
        & 336 & \textbf{0.251} & \textbf{0.300} & 0.279 & 0.328 & 0.270 & \underline{0.317} & \underline{0.269} & \underline{0.317} & 0.280 & 0.329 \\
        & 720 & \textbf{0.348} & 0.381 & 0.364 & 0.381 & 0.360 & \textbf{0.374} & \underline{0.359} & \underline{0.378} & 0.369 & \underline{0.378} \\
        & Avg & \textbf{0.235} & \textbf{0.286} & 0.253 & 0.301 & 0.244 & \underline{0.290} & \underline{0.243} & 0.291 & 0.253 & 0.297 \\
        \midrule
        \multirow{5}{*}{\rotatebox{90}{Solar}} & 96 & \textbf{0.160} & 0.247 & \underline{0.180} & \textbf{0.239} & 0.190 & 0.268 & 0.183 & 0.249 & 0.187 & \underline{0.245} \\
        & 192 & \underline{0.200} & \textbf{0.256} & 0.210 & 0.284 & 0.214 & 0.279 & 0.209 & \textbf{0.256} & \textbf{0.199} & 0.257 \\
        & 336 & \textbf{0.221} & 0.278 & \underline{0.227} & \textbf{0.273} & 0.233 & 0.294 & 0.232 & 0.295 & 0.239 & \underline{0.277} \\
        & 720 & \textbf{0.212} & 0.278 & 0.232 & \underline{0.277} & \underline{0.218} & 0.286 & 0.219 & 0.278 & 0.233 & \textbf{0.273} \\
        & Avg & \textbf{0.198} & \underline{0.265} & 0.212 & 0.268 & 0.214 & 0.282 & \underline{0.211} & 0.270 & 0.215 & \textbf{0.263} \\
        \midrule
        \multirow{5}{*}{\rotatebox{90}{Electricity}} & 96 & \textbf{0.146} & \underline{0.250} & 0.148 & 0.252 & 0.148 & 0.253 & \textbf{0.146} & \textbf{0.249} & \textbf{0.146} & 0.251 \\
        & 192 & \textbf{0.161} & \underline{0.265} & 0.175 & 0.273 & \underline{0.163} & 0.266 & 0.164 & \textbf{0.263} & 0.170 & 0.271 \\
        & 336 & \textbf{0.179} & \textbf{0.286} & 0.183 & 0.290 & \underline{0.182} & 0.288 & 0.187 & 0.289 & 0.184 & \underline{0.287} \\
        & 720 & \textbf{0.221} & \textbf{0.320} & 0.228 & 0.329 & \underline{0.223} & \underline{0.325} & 0.232 & 0.331 & 0.233 & 0.336 \\
        & Avg & \textbf{0.177} & \textbf{0.280} & 0.184 & 0.286 & \underline{0.179} & \underline{0.283} & 0.182 & \underline{0.283} & 0.183 & 0.286 \\
        \midrule
        \multirow{5}{*}{\rotatebox{90}{ETTh1}} & 96 & \textbf{0.370} & \textbf{0.398} & 0.379 & 0.408 & 0.381 & 0.407 & \underline{0.375} & \underline{0.406} & 0.385 & 0.411 \\
        & 192 & \textbf{0.421} & \textbf{0.430} & 0.431 & 0.440 & 0.430 & 0.439 & \underline{0.423} & \underline{0.433} & 0.441 & 0.447 \\
        & 336 & \textbf{0.449} & \textbf{0.447} & 0.473 & \underline{0.461} & 0.472 & 0.464 & \underline{0.466} & \underline{0.461} & 0.489 & 0.474 \\
        & 720 & \textbf{0.458} & \textbf{0.471} & \underline{0.481} & \underline{0.488} & 0.511 & 0.512 & 0.489 & 0.495 & 0.537 & 0.519 \\
        & Avg & \textbf{0.425} & \textbf{0.437} & 0.441 & \underline{0.449} & 0.449 & 0.456 & \underline{0.438} & \underline{0.449} & 0.463 & 0.463 \\
        \midrule
        \multirow{5}{*}{\rotatebox{90}{ETTh2}} & 96 & \textbf{0.287} & \underline{0.348} & 0.300 & 0.354 & 0.294 & \textbf{0.347} & \underline{0.291} & 0.350 & 0.295 & 0.349 \\
        & 192 & \textbf{0.382} & \textbf{0.407} & 0.399 & 0.418 & \underline{0.393} & \underline{0.417} & 0.412 & 0.430 & 0.407 & 0.420 \\
        & 336 & \textbf{0.441} & \textbf{0.452} & \underline{0.451} & \underline{0.458} & 0.470 & 0.480 & 0.461 & 0.463 & 0.464 & 0.465 \\
        & 720 & \textbf{0.439} & \textbf{0.471} & 0.450 & 0.477 & \underline{0.445} & \underline{0.472} & 0.497 & 0.501 & 0.491 & 0.503 \\
        & Avg & \textbf{0.387} & \textbf{0.420} & \underline{0.400} & \underline{0.427} & 0.401 & 0.429 & 0.415 & 0.436 & 0.414 & 0.434 \\
        \midrule
        \multirow{5}{*}{\rotatebox{90}{ETTm1}} & 96 & \textbf{0.297} & \textbf{0.346} & 0.323 & 0.368 & \underline{0.306} & \underline{0.355} & 0.313 & 0.359 & 0.314 & 0.364 \\
        & 192 & \textbf{0.347} & \textbf{0.382} & 0.367 & 0.394 & 0.357 & 0.387 & \underline{0.351} & \underline{0.383} & 0.363 & 0.394 \\
        & 336 & \textbf{0.388} & \textbf{0.409} & 0.403 & 0.419 & 0.399 & 0.417 & \underline{0.393} & \underline{0.411} & 0.408 & 0.427 \\
        & 720 & \textbf{0.441} & \textbf{0.446} & 0.462 & 0.453 & 0.454 & 0.459 & \underline{0.451} & \underline{0.449} & 0.462 & 0.456 \\
        & Avg & \textbf{0.368} & \textbf{0.396} & 0.389 & 0.409 & 0.379 & 0.405 & \underline{0.377} & \underline{0.401} & 0.387 & 0.410 \\
        \midrule
        \multirow{5}{*}{\rotatebox{90}{ETTm2}} & 96 & \textbf{0.166} & \textbf{0.254} & 0.170 & 0.257 & 0.172 & 0.257 & \underline{0.169} & \underline{0.255} & 0.171 & 0.259 \\
        & 192 & \textbf{0.231} & \textbf{0.299} & 0.240 & 0.304 & 0.244 & 0.306 & 0.242 & 0.305 & \underline{0.239} & \underline{0.301} \\
        & 336 & \textbf{0.298} & \textbf{0.345} & 0.308 & 0.351 & 0.317 & 0.352 & 0.306 & 0.352 & \underline{0.301} & \underline{0.348} \\
        & 720 & \textbf{0.412} & \underline{0.417} & 0.420 & 0.421 & \underline{0.416} & \underline{0.417} & \underline{0.416} & \textbf{0.416} & 0.417 & \underline{0.417} \\
        & Avg & \textbf{0.277} & \textbf{0.329} & 0.285 & 0.333 & 0.287 & 0.333 & 0.283 & 0.332 & \underline{0.282} & \underline{0.331} \\
        \midrule
        \multicolumn{2}{c|}{Average} & \textbf{0.313} & \textbf{0.355} & 0.325 & 0.361 & 0.324 & 0.361 & \underline{0.321} & \underline{0.359} & 0.323 & 0.360 \\
        \bottomrule
      \end{tabular}
    \end{small}
  \end{center}
\end{table*}

\FloatBarrier
\subsection{Statement of LLM Usage}
\label{sec:llm-usage}

LLMs were used to assist with writing-related tasks including grammar checking, wording adjustments, text formatting, and equation formatting. LLMs were also used to search for existing methods and references. All cited literature was read by the authors directly. During experiments, LLMs assisted with generating and debugging code. LLMs were not used for defining research problems, proposing ideas, designing methodologies, or developing the model architecture.


\end{document}